\documentclass{article}

\PassOptionsToPackage{numbers, compress}{natbib}

\usepackage[preprint]{neurips_2026}

\usepackage[utf8]{inputenc} 
\usepackage[T1]{fontenc}    
\usepackage{hyperref}       
\usepackage{url}            
\usepackage{booktabs}       
\usepackage{amsfonts}       
\usepackage{nicefrac}       
\usepackage{microtype}      
\usepackage{xcolor}         
\usepackage{amssymb} 

\usepackage{amsthm}
\usepackage{siunitx}
\usepackage{graphicx}
\usepackage{algorithm}
\usepackage{algpseudocode}
\newtheorem{proposition}{Proposition}
\newtheorem{lemma}{Lemma}
\newtheorem{assumption}{Assumption}
\newtheorem{remark}{Remark}
\usepackage{amsmath}

\usepackage{hyperref}
\usepackage{cleveref}
\usepackage{etoolbox}
\definecolor{pblue}{RGB}{0, 102, 204} 
\definecolor{green}{RGB}{0, 128, 0} 
\definecolor{ahc}{RGB}{100, 128, 0} 

\crefname{assumption}{assumption}{assumptions}

\title{Generating Physically Consistent Molecules with Energy-Based Models}

\author{%
Christoph~Griesbacher\footnotemark[2]\\
Graz University of Technology\\
\texttt{griesbacher@tugraz.at}\\
\And
Lea~Bogensperger\\
University of Zurich\\
\texttt{lea.bogensperger@uzh.ch} \\
\AND
Andreas~Habring\\
Graz University of Technology\\
\texttt{andreas.habring@tugraz.at} \\
\And
Thomas~Pock\\
Graz University of Technology\\
\texttt{thomas.pock@tugraz.at} \\
}

\newcommand{\Lc}{\mathcal{L}}

\newcommand{\dd}{\mathrm{d}}

\newcommand{\R}{\mathbb{R}}
\newcommand{\dom}{\mathrm{dom}}
\newcommand{\W}{\mathcal{W}}

\begin{document}

\newpage
\maketitle
\footnotetext[2]{Corresponding author.}
\footnotetext[1]{Code available at \url{https://github.com/griesbchr/EBMol}.}
\begin{abstract}
Molecules in equilibrium follow a Boltzmann distribution, making the underlying energy landscape a physically grounded modeling objective. However, such landscapes are difficult to learn from data and, once learned, hard to sample from. Diffusion and flow-matching models sidestep these difficulties by learning a time-conditional score or transport field between noise and data, losing the energy inductive bias in exchange for a more tractable training objective. We introduce EBMol, an energy-based model (EBM) that restores this inductive bias by learning an atom-additive scalar potential without explicit simulation during training. Our method employs a flow-inspired Restoring Field Matching objective to approximate the energy landscape. We adopt the Mirror-Langevin algorithm for sampling, enabling unified updates of atomic positions and types, and incorporate parallel tempering for inference-time compute scaling. EBMol is the first EBM for 3D molecular generation to achieve state-of-the-art performance on QM9 and GEOM-Drugs. Moreover, we show that the learned energy landscape serves as a principled quality metric for ranking and filtering configurations, and demonstrate controllable generation without retraining through shape-steered sampling via potential composition and zero-shot linker design.\footnotemark[1] 

\end{abstract}

\section{Introduction}
 \label{sec:intro}

Molecular states at thermal equilibrium are governed by the Boltzmann distribution $p(\mathbf{x}) \propto \exp(-E(\mathbf{x})/(k_B T))$, in which the probability of a configuration is determined entirely by its potential energy. The standard datasets for molecular generation, QM9~\cite{ramakrishnanQuantumChemistryStructures2014} and GEOM~\cite{axelrodGEOMEnergyannotatedMolecular2022a}, are not drawn from this distribution. Instead, they contain geometry-optimized structures, each relaxed to a local energy minimum by density functional theory or semi-empirical methods. What is physically meaningful in these datasets is where the energy minima lie. The relative frequencies at which different molecular structures appear reflect enumeration strategies and computational budgets, not thermodynamic weights~\cite{reymondEnumerationChemicalSpace2012}. A scalar energy landscape whose minima coincide with stable configurations is therefore the natural modeling target: it captures exactly the physical content of the data and takes the functional form of a Boltzmann distribution, $p_\theta(\mathbf{x}) \propto \exp(-E_\theta(\mathbf{x})/\tau)$. 

Learning and sampling from such landscapes, however, have remained open problems. Training a physically-calibrated energy function requires Boltzmann-distributed data~\cite{unkeMachineLearningForce2021a} such as SPICE~\cite{eastmanSPICEDatasetDruglike2023} or MD17~\cite{chmielaMachineLearningAccurate2017}, but datasets with this property target interatomic interactions and conformational sampling rather than chemical diversity and are unsuitable for generation. Sampling is equally difficult: generating molecules from an energy function typically relies on Markov Chain Monte Carlo (MCMC) methods, which must traverse the extremely rugged, unphysical regions between noise and valid structures~\cite{elijosiusZeroShotMolecular2025a}. Diffusion~\cite{ho2020denoising,songscore} and flow matching models~\cite{lipmanflow,liuflow} sidestep both barriers by directly learning noise-to-data transport maps through time-conditional vector fields, achieving state-of-the-art results. However, the resulting models provide no explicit energy function and no mechanism to evaluate, rank, or compose configurations at inference time.

 \begin{figure}
  \centering
    \includegraphics[width=0.8\textwidth, trim={0 1.cm 0 1.5cm}, clip]{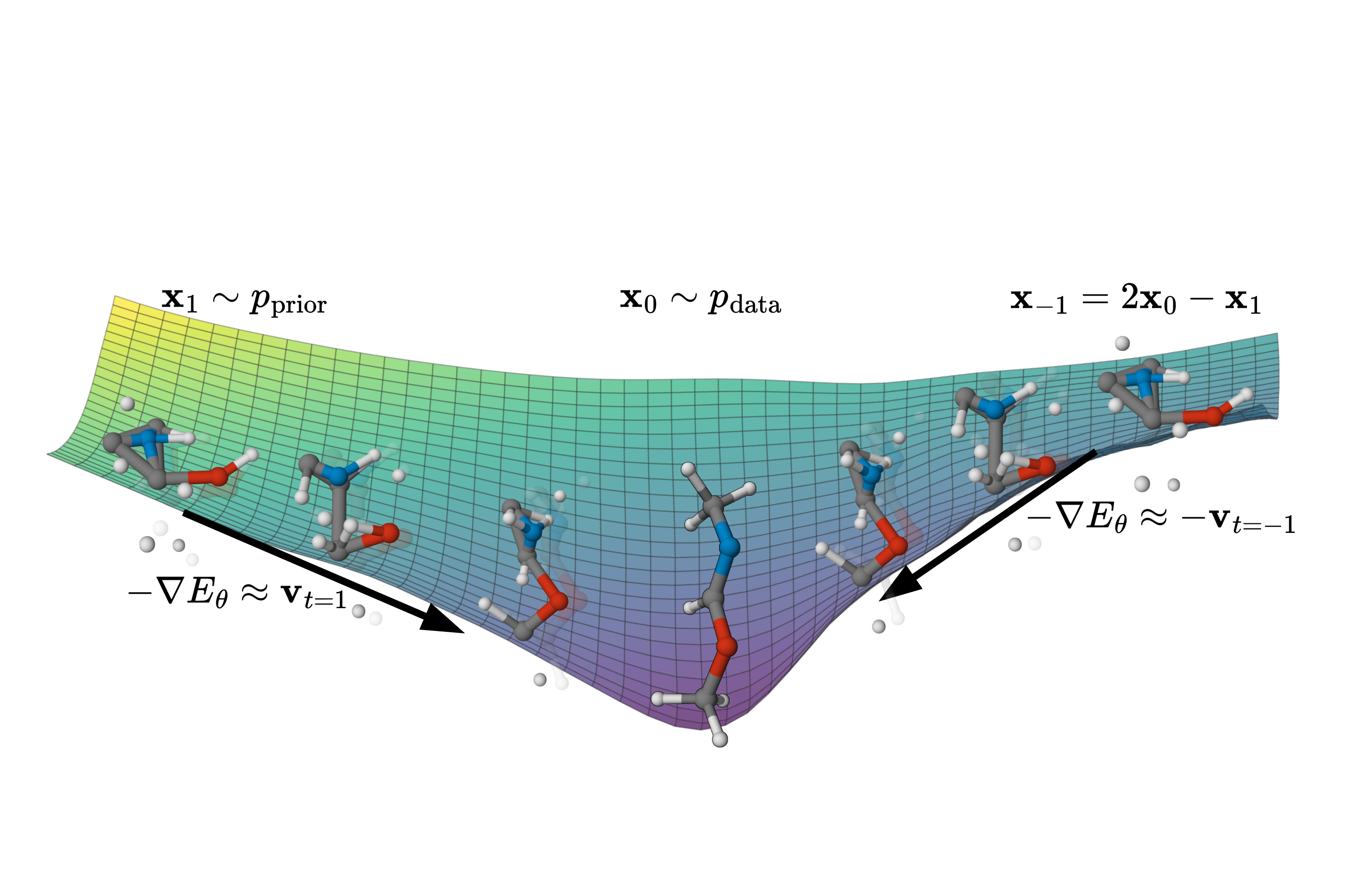}
  \caption{Overview of Restoring Field Matching. The RFM loss shapes a scalar energy landscape with data points $x_0\sim p_{\text{data}}$ as local minima by training the negative gradient $-\nabla E_{\theta}$ to point toward data from both sides: from the prior (interpolation, $t \in [0,1]$) and from beyond the data (extrapolation, $t \in [-1,0]$).}
  \label{fig: teaser fig}
\end{figure}

In this work, we introduce \textit{EBMol}, an Energy-Based Model (EBM) for Molecules that addresses both barriers. For training, we propose Restoring Field Matching (RFM, Figure~\ref{fig: teaser fig}), a flow-matching-inspired objective that shapes the energy landscape with data points as local minima, without requiring simulation or Boltzmann-distributed data. For sampling, we adapt the Mirror Langevin Algorithm (MLA)~\cite{zhangWassersteinControlMirror2020} to molecular state spaces, applying Euclidean updates on coordinates and simplex-constrained updates on atom types jointly within each step. Given that the learned energy is an explicit, persistent scalar function rather than an implicit mapping from noise to data, capabilities that time-conditional models cannot naturally provide follow directly: energy-based ranking and filtering of generated samples, composable generation through potential composition, and zero-shot conditional generation by fixing molecular fragments.

Our contributions are as follows:
\begin{itemize}

    \item We introduce Restoring Field Matching (RFM), a simulation-free     training objective for tractable learning of per-atom additive energy landscapes, and establish theoretical guarantees for the resulting optimization problem.
    
    \item We adapt the Mirror Langevin Algorithm to the mixed continuous-simplex state space of molecules, unifying coordinate and atom types updates in a single framework, combined with parallel tempering enabling inference-time compute scaling.
    
    \item EBMol is the first EBM for 3D molecular generation to achieve state-of-the-art performance on QM9 and GEOM-Drugs. This demonstrates that the benefits of energy-based modeling, including physically meaningful scoring, inference-time ranking, composable shape steering, and zero-shot linker design, are attainable without sacrificing generation quality.
\end{itemize}

\section{Related work}
\label{related work}

\paragraph{Energy-based modeling and related time-unconditional frameworks.}
Energy-based models~\cite{lecunTutorialEnergyBasedLearning} have traditionally been trained via MCMC-based maximum likelihood~\cite{hinton2002training} or score matching~\cite{songGenerativeModelingEstimating2019}, both of which sidestep the partition function but at substantial cost: MCMC is expensive and prone to mode collapse~\cite{duImprovedContrastiveDivergence2021}, while score matching either requires Hessian traces or re-introduces a time variable through noise conditioning~\cite{songGenerativeModelingEstimating2019}. Despite the recent popularity of time-conditional diffusion~\cite{ho2020denoising,songscore} and flow matching methods~\cite{lipmanflow,liuflow}, several recent works have revisited time-unconditional EBMs. Energy Matching~\cite{balcerakEnergyMatchingUnifying2025b} learns an energy landscape in two stages, combining transport and contrastive-divergence. Equilibrium Matching~\cite{wangEquilibriumMatchingGenerative2025} is conceptually closest to RFM, also enforcing stationarity at data by scaling the target field to zero, but learns an implicit non-conservative vector field rather than an explicit scalar energy. Also, neither method addresses discrete-continuous state spaces. Within the molecular domain specifically, time-unconditional EBM training has previously been attempted only on highly restricted settings by \cite{jainiLearningEquivariantEnergy2021}, who train an equivariant EBM via simulation-based contrastive divergence on a single-stoichiometry subset of QM9, thus neglecting variable molecule size, mixed discrete-continuous state, or modern benchmarks.

\paragraph{Generative modeling of 3D molecules.} Various different generative modeling paradigms have been explored for 3D molecular generation. Besides autoregressive models~\cite{daigavaneSymphonySymmetryEquivariantPointCentered2023, chengScalableAutoregressive3D2025, gebauerInverseDesign3d2022a}, field-based approaches~\cite{kirchmeyerScorebased3DMolecule2024a, kirchmeyerUnifiedAllatomMolecule2025a}, and, most prominently, diffusion and flow matching models. These methods differ primarily in their treatment of bonds: bond-explicit methods use dedicated supervision~\cite{pengMolDiffAddressingAtomBond2023a, vignacMiDiMixedGraph2023a, huangLearningJoint2D2023, irwinSemlaFlowEfficient3D2025, dunnMixedContinuousCategorical2024a, reidenbachApplicationsModularCoDesign2025a}, while bond-implicit approaches, which include our work, rely on emergent geometric accuracy to recover connectivity post-hoc via OpenBabel~\cite{oboyle_open_2011, vignacMiDiMixedGraph2023a} or predefined lookup tables~\cite{hoogeboomEquivariantDiffusionMolecule2022}.
The seminal EDM~\cite{hoogeboomEquivariantDiffusionMolecule2022}  introduced equivariant diffusion via the EGNN~\cite{satorras2021n} backbone to molecular modeling. Subsequent work refined this line through latent-space variants~\cite{xuGeometricLatentDiffusion2023}, alternative architectures and training~\cite{niStraightLineDiffusionModel2025a}, flow matching~\cite{songEquivariantFlowMatching2023, kleinEquivariantFlowMatching2023, hongAccelerating3DMolecule2025}, and Bayesian flow networks~\cite{songUnifiedGenerativeModeling2023}. Several works that incorporate energy-inspired components into their generative pipeline are related to our approach. These include using Amber and MACE force fields~\cite{wuDiffusionbasedMoleculeGeneration2022a, elijosiusZeroShotMolecular2025a}, interpreting self-consistency residuals as energies in iteratively refined flow models~\cite{zhou_energy-based_2025}, or employing energy-driven objectives for conformer generation~\cite{xuEnergyGuidedFlowMatching2025a}. Crucially, however, none of these methods learn a persistent, scalar energy landscape that drives end-to-end sampling.

\paragraph{Mirror Langevin algorithm and constrained sampling.}
The Mirror Langevin Algorithm~\cite{zhangWassersteinControlMirror2020} extends mirror descent~\cite{beckMirrorDescentNonlinear2003, ben-talOrderedSubsetsMirror2001}, originally developed for constrained optimization, to sampling on constrained domains. Theoretical analyses on convex domains have established convergence rates~\cite{liMirrorLangevinAlgorithm2022, ahnEfficientConstrainedSampling} and Metropolis-adjusted extensions~\cite{srinivasanFastSamplingConstrained2024}. While MLA has recently been adapted for generative modeling on homogeneous constrained domains~\cite{liuMirrorDiffusionModels2023a}, EBMol is, to the best of our knowledge, the first unified mixed-geometry Langevin framework for joint continuous-discrete deep generative modeling.

\section{Method}
\label{sec:method} 
We propose \textit{EBMol}, a time-unconditional energy-based model that learns a scalar potential $E_{\theta}(\mathbf{x})$ over molecular configurations.  EBMol combines two largely independent components: Restoring Field Matching (RFM) for tractable shaping of the energy landscape, and the Mirror Langevin Algorithm (MLA)~\cite{zhangWassersteinControlMirror2020} for sampling from it.

\subsection{Problem formulation}
\label{sec:problem}
A molecule with $N$ atoms is represented as a joint state $\mathbf{x} = (\mathbf{c}, \mathbf{p})$, where $\mathbf{c} \in \mathbb{R}^{N \times 3}$ are atomic coordinates in Euclidean space and $\mathbf{p}=(\mathbf{p}_0, \dots, \mathbf{p}_{N-1})$ are per-atom categorical distributions with $\mathbf{p}_i \in \Delta^{K-1} = \{\mathbf{p}_i \in \mathbb{R}^K \mid p_{i,k} \ge 0, \sum_k p_{i,k} = 1\}$ over $K$ chemical elements. 
Bonds are not explicitly modeled; instead, connectivity and bond orders are inferred post-hoc from $(\mathbf{c}, \mathbf{p})$, placing the burden of chemical validity primarily on geometric consistency. We parameterize a time-unconditional energy $E_\theta(\mathbf{c}, \mathbf{p})$ over this mixed state space. To handle variable-sized molecules, we impose a permutation-invariant additive decomposition over atoms $E_\theta(\mathbf{c}, \mathbf{p}) = \sum_{i=1}^N E_\theta^{(i)}(\mathbf{c}, \mathbf{p})$.

\subsection{Learning with the Restoring Field Matching objective}
\label{sec:rfm}
 
Learning a meaningful energy landscape is challenging. Maximum-likelihood training relies on MCMC~\cite{hinton2002training}, and score matching either requires Hessian traces~\cite{hyvarinen2005estimation} or introduces a time variable~\cite{ho2020denoising,songscore}. We instead construct the landscape \emph{geometrically}: using flow-matching-style interpolation, we regress the energy gradient toward a field that points back to the data from both sides of the manifold, making data points local minima.

\paragraph{Constructing the restoring field.} Standard flow matching defines an interpolant $\mathbf{x}_t $ between $\mathbf{x}_1 \sim p_{\text{prior}}$ at $t=1$ and  $\mathbf{x}_0 \sim p_{\text{data}}$ at $t=0$, with a target velocity that is nonzero everywhere on the interval. Data is thus a trajectory endpoint, not a stationary point of any field. We begin by extending the interpolation to $t \in [-1, 1]$, so the interpolant $\mathbf{x}_t$  passes \emph{through} the data rather than terminating at it. We construct this extended interpolant separately for the two components of $\mathbf{x} = (\mathbf{c}, \mathbf{p})$. For continuous coordinates, we first align $\mathbf{c}_1$ to $\mathbf{c}_0$ using the equivariant optimal-transport coupling to reduce trajectory curvature.
 The aligned noise is denoted $\mathbf{c}_1$, and we define a linear trajectory $\mathbf{c}_t = \mathbf{c}_0 + t(\mathbf{c}_1 - \mathbf{c}_0), \quad t \in [-1, 1]$.
For categorical features, linear extrapolation leaves the simplex. We instead use a reflection, $\mathbf{p}_t = \mathbf{p}_0 + |t|(\mathbf{p}_1 - \mathbf{p}_0)$,
which remains in $\Delta^{K-1}$ and is symmetric around $t=0$.

Along each interpolant, we define a target field that points toward the data $\mathbf{x}_0$. Building on the natural choice $\mathrm{sign}(t)(\mathbf{x}_0 - \mathbf{x}_1)$, we directly encode stationarity at the data by scaling the target with a smoothing function $b: [0,1] \to [0,1]$ satisfying $b(0)=0$ and $b(1)=1$, so that the target vanishes at $t=0$:
\begin{equation}
{\mathbf{u}}^{\mathbf{c}} = b(|t|) \cdot \mathrm{sign}(t) \cdot (\mathbf{c}_0 - \mathbf{c}_1), \qquad
{\mathbf{u}}^{\mathbf{p}} = b(|t|) \cdot (\mathbf{p}_0 - \mathbf{p}_1).
\label{eq:target_smooth}
\end{equation}
The model is trained to match $-\nabla_{\mathbf{x}} E_\theta(\mathbf{x}_t)$ against this target, as described below.

\paragraph{Training objective.} 
Denoting $\mathbf{v} := -\nabla_\mathbf{x} E_\theta(\mathbf{x}_t)$
and $\mathbf{u} := (\mathbf{u}^\mathbf{c}, \mathbf{u}^\mathbf{p})$, the RFM objective is given by
\begin{equation}
    \mathcal{L}_\text{RFM} = \mathbb{E}_{\mathbf{x}_0, \mathbf{x}_1, t \sim U[-1,1]} \!\left[ \big\| \mathbf{v} - \mathbf{u} \big\|^2 \right].
\end{equation}
While the restoring target specifies the energy \emph{gradient}, it leaves the energy \emph{value} at the data
unconstrained. Without regularization, different data points may sit at
different absolute energies, biasing the sampler toward artificially
deeper basins. We therefore add a per-atom regularization term that
anchors clean-data energies toward zero:
\begin{equation}
\mathcal{L}_{\text{reg}} = \mathbb{E}_{\mathbf{x}_0 \sim p_{\text{data}}} \left[ \frac{1}{N} \sum_{i=1}^N E_\theta^{(i)}(\mathbf{x}_0)^2 \right].
\end{equation}
The full training loss is then given by 
\begin{equation}\label{eq:objective}
\mathcal{L} = \mathcal{L}_\text{RFM} + \lambda_{\text{reg}}\, \mathcal{L}_{\text{reg}},
\end{equation}
where $\lambda_{\text{reg}}$ controls the regularization strength. Training is summarized in Algorithm~\ref{alg:rfm-train}.

\subsection{Sampling with annealed Mirror-Langevin algorithm}
\label{sec:mla}
We sample from $p_\theta(\mathbf{x}) \propto \exp(-E_\theta(\mathbf{x})/\tau)$ via Langevin dynamics. While standard Langevin suffices for coordinates, it pushes atom types outside the simplex. The Mirror Langevin Algorithm (MLA)~\cite{zhangWassersteinControlMirror2020} resolves this by performing updates in an unconstrained dual space and projecting back through a mirror map.

Let $\Phi: \mathcal{D} \to \mathbb{R}$ be a strictly convex, twice-differentiable mirror potential on the constrained domain $\mathcal{D}$ such that its gradient $\nabla\Phi$ maps $\mathcal{D}$ to an unconstrained dual space, and the inverse map $(\nabla\Phi)^{-1}$ projects back to $\mathcal{D}$. The MLA update with step size $\eta$, temperature $\tau$, and noise $\boldsymbol{\xi} \sim \mathcal{N}(\mathbf{0}, \mathbf{I})$ reads
\begin{align}
\mathbf{y}^{(k+1)} &= \nabla\Phi(\mathbf{x}^{(k)}) - \eta\,\nabla_\mathbf{x} E_\theta(\mathbf{x}^{(k)}) + \sqrt{2\eta\tau}\,\bigl(\nabla^2\Phi(\mathbf{x}^{(k)})\bigr)^{1/2}\boldsymbol{\xi}, \\
\mathbf{x}^{(k+1)} &= (\nabla\Phi)^{-1}(\mathbf{y}^{(k+1)}).
\end{align}
The factor $(\nabla^2\Phi)^{1/2}$ is the Riemannian metric correction induced by the mirror potential, ensuring the update is consistent with the underlying mirror Langevin SDE.

\paragraph{Continuous coordinates and categorical features.} 
The choice of mirror potential $\Phi$ specializes MLA to each component of $\mathbf{x} = (\mathbf{c}, \mathbf{p})$. For $\mathbf{c} \in \mathbb{R}^{N\times 3}$, the Euclidean potential $\Phi^\mathbf{c}(\mathbf{c}) = \tfrac{1}{2}\|\mathbf{c}\|^2$ reduces MLA to the Unadjusted Langevin Algorithm. 
For atom types $\mathbf{p} \in \Delta^{K-1}$, the negative entropy $\Phi^\mathbf{p}(\mathbf{p}) = \sum_i p_i \log p_i$ maps the simplex to logit space via $\nabla\Phi^\mathbf{p} = \log\mathbf{p} + \mathbf{1}$, with $(\nabla\Phi^\mathbf{p})^{-1} = \mathrm{softmax}$ projecting back exactly. Since the negative-entropy mirror map does not satisfy the self-concordance condition of~\cite{zhangWassersteinControlMirror2020} near the simplex boundary, we clamp $\mathbf{p} \geq \epsilon$ before each update (see Appendix~\ref{sec:boundary_stabilization} for details). The full update is given in Algorithm~\ref{alg:mla-step}.

\begin{figure}[h]
\begin{minipage}[t]{0.48\textwidth}
\begin{algorithm}[H]
\caption{RFM training step}
\label{alg:rfm-train}
\begin{algorithmic}[1]
\Require Energy $E_\theta$, smoothing function $b(t)$
\State Sample $\mathbf{x}_0 = (\mathbf{c}_0,\mathbf{p}_0) \sim p_\text{data}$
\State Sample $\mathbf{x}_1 = (\mathbf{c}_1,\mathbf{p}_1) \sim p_\text{prior}$
\State Sample $t\sim \mathcal{U}(-1,1)$
\State $T_\text{OT} \gets \mathrm{OTSolver}(\mathbf{c}_1, \mathbf{c}_0)$
\State $\mathbf{c}_t \gets \mathbf{c}_0 + t\,(T_\text{OT}\,\mathbf{c}_1 - \mathbf{c}_0)$
\State $\mathbf{p}_t \gets \mathbf{p}_0 + |t|\,(\mathbf{p}_1 - \mathbf{p}_0)$
\State $\mathbf{v} \gets -\nabla_{\mathbf{x}} E_\theta(\mathbf{c}_t, \mathbf{p}_t)$
\State $\mathbf{u}_\mathbf{c} \gets b(|t|)\,\mathrm{sign}(t)\,(\mathbf{c}_0 - \mathbf{c}_1)$
\State $\mathbf{u}_\mathbf{p} \gets b(|t|)\,(\mathbf{p}_0 - \mathbf{p}_1)$
\State $\mathcal{L}_\text{reg} \gets \frac{\lambda_{\text{reg}}}{N} \sum_{i=1}^N E_\theta^{(i)}(\mathbf{x}_0)^2$
\State $\mathcal{L} \gets \lVert \mathbf{v} - \mathbf{u} \rVert^2 + \mathcal{L}_\text{reg}$
\State \textbf{Update} $\theta \gets \theta - \eta\,\nabla_\theta \mathcal{L}$
\end{algorithmic}
\end{algorithm}
\end{minipage}
\hfill
\begin{minipage}[t]{0.48\textwidth}
\begin{algorithm}[H]
\caption{Mirror-Langevin sampling step}
\label{alg:mla-step}
\begin{algorithmic}[1]
\Require State $(\mathbf{c}, \mathbf{p})$, $E_\theta$, step size $\eta$, temperature $\tau$, noise scales $\sigma_{\mathbf{c}}, \sigma_{\mathbf{p}}$,
         clipping $\epsilon$
\State $\mathbf{g}_{\mathbf{c}} \gets \nabla_{\mathbf{c}} E_\theta(\mathbf{c}, \mathbf{p}), \quad \mathbf{g}_{\mathbf{p}} \gets \nabla_{\mathbf{p}} E_\theta(\mathbf{c}, \mathbf{p})$
\Statex \textbf{Continuous update:}
\State $\boldsymbol{\xi}_{\mathbf{c}} \sim \mathcal{N}(\mathbf{0}, \sigma_{\mathbf{c}}^2 \mathbf{I})$
\State $\mathbf{c}' \gets \mathbf{c} - \eta\, \mathbf{g}_{\mathbf{c}} + \sqrt{2\eta\tau}\; \boldsymbol{\xi}_{\mathbf{c}}$
\State $\mathbf{c}' \gets \mathbf{c}' - \frac{1}{N}\sum_{i} \mathbf{c}'_i$
\Statex \textbf{Categorical update:}
\State $\tilde{\mathbf{p}} \gets \max(\mathbf{p}, \epsilon)$
\State $\boldsymbol{\xi}_{\mathbf{p}} \sim \mathcal{N}(\mathbf{0}, \sigma_{\mathbf{p}}^2 \mathbf{I})$
\State $\mathbf{y} \gets \log \tilde{\mathbf{p}} - \eta\, \mathbf{g}_{\mathbf{p}} + \sqrt{2\eta\tau}\;\tilde{\mathbf{p}}^{-1/2} \odot \boldsymbol{\xi}_{\mathbf{p}}$
\State $\mathbf{p}' \gets \mathrm{Softmax}(\mathbf{y})$
\State \Return $(\mathbf{c}', \mathbf{p}')$
\end{algorithmic}
\end{algorithm}
\end{minipage}
\end{figure}
\subsection{Parallel tempering}
\label{sec:pt}

We run $M$ parallel chains at a fixed ladder of temperatures $\tau_1 > \tau_2 > \dots > \tau_M$, with periodic Metropolis-Hastings swaps between adjacent levels according to energies $E$ with acceptance probability
\begin{equation}
\min\!\left(1,\, \exp\!\left[\bigl(E(\mathbf{x}_j) - E(\mathbf{x}_i)\bigr)\bigl(\tfrac{1}{\tau_i} - \tfrac{1}{\tau_j}\bigr)\right]\right).
\end{equation}
Hot chains explore broadly, cold chains refine, and swaps propagate discoveries down the ladder. Periodically, we harvest samples from the coldest chain and replace them with prior samples at the same coldest level, which are swapped upward by the energy-based exchange mechanism. Subsequently, harvested samples undergo a short $\tau = 0$ relaxation. Parallel tempering offers a built-in mechanism for inference-time compute scaling by increasing the number of tempering rounds between harvests. Additional swaps and exploration before samples are extracted yields
progressively higher-quality molecules at the cost of longer sampling.
 
\subsection{Analysis}
\label{sec:analysis}
We establish two theoretical properties of the RFM objective that characterize the minimization problem defined by~\eqref{eq:objective}. The analysis considers $E$ as a general element of a function space rather than a specific network architecture. For sufficiently expressive networks, these properties are expected to hold approximately in practice, which our experiments in Section~\ref{sec: experiments} support. Formal assumptions and complete proofs for both properties are given in Appendix~\ref{sec:appendix_proofs}.

The first result ensures that the training problem is well-posed.

\begin{proposition}[Existence of solutions, informal]\label{prop:existence_informal}
    Under~appropriate assumptions on $p_\text{prior}$ and $p_\text{data}$, for every $\lambda_{\text{reg}}$ the following problem admits a unique solution:
    \[
        \min_{E} \Lc(E).
    \]
\end{proposition}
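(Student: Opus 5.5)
We treat $\Lc$ as a functional of $E$ over a Sobolev-type space and obtain existence and uniqueness by a Hilbert-space projection argument. Let $\mu$ be the law of the interpolant $\mathbf{x}_t$ under $\mathbf{x}_0\sim p_\text{data}$, $\mathbf{x}_1\sim p_\text{prior}$ (OT-aligned), $t\sim U[-1,1]$. Let $\nu$ be the law of $\mathbf{x}_0$ under $p_\text{data}$. By the tower property,
\[
\Lc_\text{RFM}(E)=\mathbb{E}_{\mu}\big[\|\nabla E(\mathbf{x})+\bar{\mathbf{u}}(\mathbf{x})\|^2\big]+C,
\qquad \bar{\mathbf{u}}(\mathbf{x})=\mathbb{E}[\mathbf{u}\mid \mathbf{x}_t=\mathbf{x}],
\]
where $C\ge 0$ does not depend on $E$. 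The regularizer is a weighted $L^2(\nu)$ norm of the per-atom energies. For the formal statement we work with the total energy on a fixed-size state space; the per-atom decomposition can either be carried as part of the function space or dropped in favour of $E(\mathbf{x}_0)^2/N$.

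\textbf{Function space.} The natural space is
\[
H=\{E:\ \nabla E\in L^2(\mu),\ E\in L^2(\nu)\},
\qquad
\|E\|_H^2=\|\nabla E\|_{L^2(\mu)}^2+\lambda_\text{reg}\|E\|_{L^2(\nu)}^2 .
\]
On $H$ the objective $\Lc(E)=\|\nabla E+\bar{\mathbf{u}}\|_{L^2(\mu)}^2+\lambda_\text{reg}\|E\|^2_{L^2(\nu)}+C$ is a continuous, strictly convex quadratic. The linear part is bounded provided $\bar{\mathbf{u}}\in L^2(\mu)$. This follows from second moments of $p_\text{data}$ and $p_\text{prior}$ and $b\le 1$, which are the first assumptions I would impose.

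\textbf{Structural assumptions.} Next I would assume the structure needed for $\|\cdot\|_H$ to be a genuine norm under which $H$ is complete.
\begin{enumerate}
\item[(i)] $\mu$ has a density bounded above and below on compact subsets of a connected open domain $\Omega$ (the product of $\R^{N\times3}$, mean-free, with the relative interior of the simplices). Gaussian prior noise supports this.
\item[(ii)] $\nu$ is supported in $\overline{\Omega}$, and traces of $H^1_{\text{loc}}$ functions onto $\operatorname{supp}\nu$ are well defined.
\end{enumerate}
Condition (ii) is delicate, since data points may lie on the simplex boundary or form a discrete set. I would handle it either by using a Poincaré-type inequality $\|E-\bar E\|_{L^2(\mu)}\lesssim\|\nabla E\|_{L^2(\mu)}$, or by mollifying $\nu$ to a smooth density. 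The key consequence is a coercivity estimate: if $\nabla E=0$ $\mu$-a.e., then $E$ is constant on $\Omega$ by connectedness, and $\lambda_\text{reg}\|E\|^2_{L^2(\nu)}=0$ forces that constant to be zero. With this, $\|\cdot\|_H$ is a norm. I would establish completeness by a Cauchy-sequence argument in weighted $H^1_{\text{loc}}$, passing to limits with Rellich compactness on compact subdomains.

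\textbf{Existence and uniqueness.} Expanding the square gives $\Lc(E)=\|E\|_H^2+2\langle \nabla E,\bar{\mathbf{u}}\rangle_{L^2(\mu)}+\text{const}$. By Riesz representation (or Lax–Milgram), there is a unique minimizer $E^\star$, characterized by the weak Euler–Lagrange equation
\[
\int \langle\nabla E^\star+\bar{\mathbf{u}},\nabla\varphi\rangle\,\dd\mu+\lambda_\text{reg}\int E^\star\varphi\,\dd\nu=0\quad\forall\varphi\in H .
\]
Uniqueness follows from strict convexity once $\|\cdot\|_H$ is a norm.

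For $\lambda_\text{reg}=0$ uniqueness holds only up to an additive constant, so I would either state it modulo constants or read the statement as applying to $\lambda_\text{reg}>0$. I expect the main obstacle to be the completeness and trace step in (ii): controlling $L^2(\nu)$ by the $\mu$-weighted gradient norm when $\nu$ may be singular and sit on the boundary. My first attempt would be to assume $\nu\ll\mu$ with a bounded density, for example by treating the dataset as slightly smoothed. Then $\|E\|_{L^2(\nu)}\le C\|E\|_{L^2(\mu)}$, and a weighted Poincaré inequality for $\mu$ closes the argument.
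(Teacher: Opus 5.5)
Your argument is correct, but it follows a different route from the paper's.

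\textbf{What each proof does.} The paper uses the direct method. A minimizing sequence has $\|E_n\|_{L^2(p_0)}$ and $\|\nabla E_n\|_{L^2(p_x)}$ bounded, and Lemma~\ref{lemma:appendix:poincare} upgrades this to a bound in $\W^{1,2}(\Omega)$. A weakly convergent subsequence is extracted, weak lower semicontinuity gives a minimizer, and uniqueness is read off from strict convexity. You instead make $H$ a Hilbert space and get existence and uniqueness at once from Riesz/Lax--Milgram.

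\textbf{The shared coercivity estimate.} Both routes hinge on $\|E\|_{L^2(\mu)}\lesssim\|E\|_{L^2(\nu)}+\|\nabla E\|_{L^2(\mu)}$. Note that this, rather than ``controlling $L^2(\nu)$ by the gradient norm'', is what your completeness step needs. The paper proves it by contradiction and weak compactness. Your assumptions give it directly: with $M=\|\dd\nu/\dd\mu\|_\infty$ and $\bar E=\int E\,\dd\mu$ one has $|\bar E|\le\|E\|_{L^2(\nu)}+\sqrt{M}\,\|E-\bar E\|_{L^2(\mu)}$. The Poincar\'e inequality then yields $\|E\|_{L^2(\mu)}\le\|E\|_{L^2(\nu)}+(1+\sqrt{M})\,C_P\|\nabla E\|_{L^2(\mu)}$. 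Hence $H$ is the weighted Sobolev space with an equivalent norm. Its completeness only needs the density of $\mu$ to be locally bounded below, so no Rellich compactness is required.

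\textbf{What each route buys.} Yours gives explicit constants, the weak Euler--Lagrange equation for $E^\star$, and linear dependence of $E^\star$ on $\bar{\mathbf u}$ with $\|E^\star\|_H\le\|\bar{\mathbf u}\|_{L^2(\mu)}$. The paper's compactness argument works when $p_0/p_x$ is only locally bounded, as in its assumption, whereas your one-line estimate needs the global bound. Its weak-compactness machinery is also exactly what is reused for the $\lambda\to0$ and $\lambda\to\infty$ limits in the next proposition.

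\textbf{The case $\lambda_{\text{reg}}=0$.} Your caveat is correct. The paper's proof bounds $\|E_n\|_{L^2(p_0)}$ through the regularizer, so it tacitly needs $\lambda_{\text{reg}}>0$ as well. At $\lambda_{\text{reg}}=0$ the minimizer is unique only up to an additive constant.
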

Given existence, we characterize the solution's behavior at the
two extremes of the regularization parameter. 
\begin{proposition}[Convergence for small and large regularization parameters, informal]\label{prop:characterize_solutions}
    Let $E^*(\mathbf x,\lambda)$ be the minimizer of $\Lc$ for $\lambda_{\text{reg}} = \lambda$. The following holds true:
    \begin{enumerate}
    \item[(i)] \underline{$\lambda=\infty$:} For every sequence $\lambda_n\rightarrow \infty$, there exists a subsequence $(\lambda_{n_k})_k$ such that $E^*(\cdot,\lambda_{n_k})\rightarrow 0$ $p_0$-a.e.; figuratively: $E^*(\mathbf x_0,\lambda)\approx 0$ for $\lambda\gg 0$ and $\mathbf x_0\sim p_{\text{data}}$.
    \item[(ii)] \underline{$\lambda=0$:}
    For every sequence $\lambda_n\rightarrow 0$, there exists a subsequence $(\lambda_{n_k})_k$ such that $\nabla E^*(\cdot,\lambda_{n_k})$ converges to the projection of the prescribed restoring field $\mathbf{u}$ onto the space of all conservative vector fields. That is, $\nabla E^*(\cdot,\lambda_{n_k})$ converges to the minimizer of the functional
    \begin{equation}
        E\mapsto \mathbb{E}_{\mathbf{x}_0, \mathbf{x}_1, t \sim U[-1,1]}[\bigl\| -\nabla E(\mathbf x_t) - \mathbf{u} \bigr\|^2]
    \end{equation}
    Moreover, if $\mathbf x\mapsto \mathbb{E}[\mathbf{u}|x_t=x]$ is conservative
    \begin{equation}
        \lim_{k\rightarrow\infty }\nabla E^*(\mathbf x,\lambda_{n_k}) = -\mathbb{E}[\mathbf{u}|x_t=x] \quad \text{$p_x$-a.e.}.
    \end{equation}
\end{enumerate}
\end{proposition}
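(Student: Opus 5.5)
We set up the problem abstractly as a quadratic functional on a Hilbert space. Let $p_x$ be the law of $\mathbf x_t$ for $t\sim U[-1,1]$ and $p_0$ the data law. We work in a Sobolev-type space $\mathcal H$ of energies $E$ with $\nabla E\in L^2(p_x)$ and $E\in L^2(p_0)$, modulo the appropriate null sets; the formal assumptions of Proposition~\ref{prop:existence_informal} in the appendix are meant to make $\mathcal H$ complete. Write $\bar{\mathbf u}(x)=\mathbb E[\mathbf u\mid \mathbf x_t=x]$. The standard flow-matching decomposition gives
\[
\Lc_{\mathrm{RFM}}(E)=\|\nabla E+\bar{\mathbf u}\|_{L^2(p_x)}^2 + C,
\]
with $C$ independent of $E$. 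Hence
\[
\Lc_\lambda(E)=\|\nabla E+\bar{\mathbf u}\|^2_{L^2(p_x)}+\lambda\|E\|^2_{L^2(p_0)}+C,
\]
which is a convex, continuous, coercive quadratic on $\mathcal H$. Let $E_\lambda:=E^*(\cdot,\lambda)$.

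\textbf{Part (i).} We compare with the competitor $E\equiv 0$. This gives
\[
\|\nabla E_\lambda+\bar{\mathbf u}\|^2+\lambda\|E_\lambda\|^2_{L^2(p_0)}\le \|\bar{\mathbf u}\|^2_{L^2(p_x)},
\]
so $\|E_\lambda\|_{L^2(p_0)}^2\le \|\bar{\mathbf u}\|^2/\lambda\to 0$. Convergence in $L^2(p_0)$ then yields a subsequence converging $p_0$-a.e. to $0$, which is exactly (i).

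\textbf{Part (ii).} Let $G$ be the closure in $L^2(p_x)$ of $\{\nabla E: E\in\mathcal H\}$, and let $g^*=-P_G\bar{\mathbf u}$, the orthogonal projection. This $g^*$ is the minimizer of the unregularized functional over $G$.

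1. We first show $\nabla E_\lambda\to g^*$ along the full family. Fix $\varepsilon>0$ and pick $E_\varepsilon$ with $\|\nabla E_\varepsilon-g^*\|\le\varepsilon$. Testing optimality against $E_\varepsilon$ gives
\[
\|\nabla E_\lambda+\bar{\mathbf u}\|^2\le \|g^*+\bar{\mathbf u}\|^2+\varepsilon'+\lambda\|E_\varepsilon\|^2_{L^2(p_0)}.
\]
Taking $\limsup_{\lambda\to0}$ and then $\varepsilon\to0$ gives $\limsup\|\nabla E_\lambda+\bar{\mathbf u}\|\le \|g^*+\bar{\mathbf u}\|=\mathrm{dist}(-\bar{\mathbf u},G)$.

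2. By Pythagoras in $G$,
\[
\|\nabla E_\lambda - g^*\|^2=\|\nabla E_\lambda+\bar{\mathbf u}\|^2-\|g^*+\bar{\mathbf u}\|^2\to 0,
\]
so we have $L^2(p_x)$ convergence. A subsequence then converges $p_x$-a.e.

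3. If $\bar{\mathbf u}$ is conservative, i.e.\ $-\bar{\mathbf u}\in G$, then $g^*=-\bar{\mathbf u}$, which gives the final claim.

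\textbf{Main obstacle: existence and uniqueness.} The quadratic is convex and coercive in the seminorm
\[
\|E\|_\ast^2=\|\nabla E\|^2_{L^2(p_x)}+\|E\|^2_{L^2(p_0)},
\]
so existence follows from the direct method once $\mathcal H$ with this norm is a Hilbert space. Uniqueness requires strict convexity, i.e.\ $\|E\|_\ast=0\Rightarrow E=0$ in $\mathcal H$. This is where the assumptions must enter. Suppose $p_x$ has a positive density on a connected open set $\Omega$ containing $\mathrm{supp}\,p_0$; this holds, e.g., for a Gaussian prior on coordinates together with reflection on the simplex. Then $\nabla E=0$ $p_x$-a.e.\ makes $E$ constant on $\Omega$, and $E=0$ on $\mathrm{supp}\,p_0$ forces the constant to be $0$.

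Completeness of $\mathcal H$ under $\|\cdot\|_\ast$ needs a Poincaré-type inequality for $p_x$ on $\Omega$, controlling $\|E-\mathbb E_{p_0}E\|$ by $\|\nabla E\|$ on bounded subdomains. Failing that, I would define $\mathcal H$ as the abstract completion, which preserves all the arguments above. With completeness in hand, existence and uniqueness follow from the Lax–Milgram/Riesz representation theorem applied to the bilinear form
\[
\langle\nabla E,\nabla F\rangle_{L^2(p_x)}+\lambda\langle E,F\rangle_{L^2(p_0)},
\]
which is coercive for every $\lambda>0$.
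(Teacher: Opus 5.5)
Your proof is correct, but it takes a different route from the paper's. The paper works with the triple integral directly. It compares with $E=0$, then uses the Poincar\'e inequality (through Lemma~\ref{lemma:appendix:poincare} for $\lambda\to\infty$, and applied to $E^*-\bar E^*$ for $\lambda\to0$) to get boundedness in $\W^{1,2}(\Omega)$. It extracts weakly convergent subsequences and uses weak lower semicontinuity to show that every weak accumulation point vanishes $p_0$-a.e.\ (for $\lambda\to\infty$) or minimizes the unregularized functional (for $\lambda\to0$).

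You instead pass to $\bar{\mathbf u}=\mathbb E[\mathbf u\mid\mathbf x_t]$ and turn (ii) into a projection problem in $L^2(p_x)$, which gives strong convergence. In (i), the bound $\|E_\lambda\|^2_{L^2(p_0)}\le\|\bar{\mathbf u}\|^2/\lambda$ already yields the $p_0$-a.e.\ convergent subsequence. In (ii), combining your approximation step with Pythagoras gives $\|\nabla E_\lambda-g^*\|^2_{L^2(p_x)}\le\varepsilon^2+\lambda\|E_\varepsilon\|^2_{L^2(p_0)}$. That is norm convergence along the whole family $\lambda\to0$, and hence a $p_x$-a.e.\ convergent subsequence. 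This matches the pointwise claims of the informal statement more faithfully than the paper's proof, which only produces weak limits; weak convergence alone does not give the asserted a.e.\ limits. Your convergence argument also needs no Poincar\'e inequality beyond the existence of $E_\lambda$.

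What the paper's compactness route buys is that the limit is an actual gradient, $\nabla F_0^*$ with $F_0^*\in\W^{1,2}(\Omega)$, so the unregularized problem attains its minimum. Your $g^*$ is only known to lie in the closure $G$. To call it ``the minimizer of $E\mapsto\mathbb E\|{-\nabla E(\mathbf x_t)}-\mathbf u\|^2$'', you should add that $G$ consists of gradients. This is one line from the Poincar\'e inequality in Assumption~\ref{ass}: if $\nabla E_n$ is Cauchy in $L^2(p_x)$, then $E_n-\bar E_n$ is Cauchy in $\W^{1,2}(\Omega)$.

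Your closing discussion of existence and uniqueness is not needed for this statement, which presupposes Proposition~\ref{thm:existence}. In any case, Lemma~\ref{lemma:appendix:poincare} bounds $\|E\|_{L^2(p_x)}$ by your $\|E\|_\ast$, which is exactly the completeness you were worried about, so no abstract completion is required.
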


The above results confirm our modeling approach theoretically: strong regularization anchors the energy to zero at the data, while the gradient matching term leads to the L2 projection of the target restoring field onto the space of conservative vector fields. At any finite $\lambda_{\text{reg}}$, the minimizer interpolates between these two extremes. However, the central modeling claim, that data points are local minima of $E_\theta$, is not established by the propositions above. It rests on the target construction ($b(0)=0$ encodes zero gradient at $t=0$) and is verified empirically in Appendix~\ref{sec:landscape_verification}.

\section{Experiments}
\label{sec: experiments}
\subsection{Setup}
\label{sec: setup}

\paragraph{Datasets.}
We evaluate EBMol on the two standard benchmarks for unconditional 3D molecular generation, QM9~\cite{ramakrishnanQuantumChemistryStructures2014} and GEOM-Drugs~\cite{axelrodGEOMEnergyannotatedMolecular2022a}. QM9 contains 134k small organic molecules with up to nine heavy atoms across five element types (H, C, N, O, F), exhaustively enumerating all stable molecules within this space at DFT level. GEOM-Drugs poses a more challenging benchmark, comprising roughly 430k drug-like molecules with an average size of 44 atoms (up to 181) across 15 element types. This dataset is substantially more diverse than QM9. 

\paragraph{Implementation.}
\label{sec:implementation}
To ensure SE(3) invariance of the energy, we employ a standard EGNN~\cite{satorras2021n} network, propagating equivariant messages across a fully connected atomic graph. We compute the per-atom energies 
by applying a two-layer MLP to the invariant node features of the final layer, omitting the final equivariant feature update. The total molecule energy is calculated by summing over the node energies, preserving SE(3) invariance of $E_\theta$. During training, we use a diagonal anisotropic Gaussian position prior whose per-molecule covariance is set to the principal-component eigenvalues of a training molecule with the same atom count, introducing a mild data-dependent shape bias that improves training stability and performance. For atom types, we use a Dirichlet distribution with concentration $1/K$, where $K$ is the number of atom types. We employ $\mathrm{tanh}(\cdot)$ as smoothing function. Pseudocode for the training and sampling algorithms is presented in Algorithms~\ref{alg:rfm-train} and~\ref{alg:mla-step}. %
During generation, we sample the atom count $N$ from the empirical training distribution and base the parallel tempering swaps on the largest per-atom energy $\max_iE_\theta^{(i)}(\mathbf{x})$. A full list of hyperparameters, more experiment details and ablations on design choices can be found in Appendices \ref{sec: model sampler ablation} and \ref{sec: hyperparameters and details}. 

\paragraph{Baselines and evaluation.} 
We compare against a variety of established bond-implicit baselines for molecular generation~\cite{hoogeboomEquivariantDiffusionMolecule2022, wuDiffusionbasedMoleculeGeneration2022a, songEquivariantFlowMatching2023, cornetEquivariantNeuralDiffusion2024a, xuGeometricLatentDiffusion2023, hongAccelerating3DMolecule2025, songUnifiedGenerativeModeling2023, niStraightLineDiffusionModel2025a}. All baselines share the same EGNN network architecture~\cite{satorras2021n}, with the exception of SLDM, employing a modified EGNN variant based on UniGEM~\cite{fengUniGEMUnifiedApproach2024} with more parameters (2.2M vs 4.2M for GEOM-Drugs). On GEOM-Drugs, we restrict baselines to methods with publicly available checkpoints to ensure consistent re-evaluation under our revised evaluation protocol.  

On QM9 we follow the standard protocol of~\cite{hoogeboomEquivariantDiffusionMolecule2022}, reporting atom and molecule stability, validity, uniqueness, and novelty. On GEOM-Drugs, recent models have begun to saturate this protocol: its QM9-calibrated bond and valency lookup tables yield a data baseline of only 86.5\% atom stability, a ceiling already exceeded by SLDM~\cite{niStraightLineDiffusionModel2025a}. To restore this benchmark's discriminative power, we adopt a revised protocol. We perform bond inference via OpenBabel~\cite{oboyle_open_2011}, which additionally infers formal charges and aromaticity. This additional information unlocks the usage of the valency tables proposed by~\cite{nikitinGEOMdrugsRevisitedMore2025}, which are specifically calibrated for the element and bond-type diversity encountered in GEOM-Drugs. Furthermore, we use the OpenBabel-inferred molecules for RDKit sanitization with stereochemistry assignment for more expressive validity, uniqueness, and novelty metrics. Under this revised protocol, the data baseline rises to 99.6\% atom stability (from 86.5\%) and 88.2\% molecule stability (previously \textasciitilde0\%), indicating a much more faithful assessment of chemical validity. We additionally report the Vendi score~\cite{friedmanVendiScoreDiversity2023} for molecular diversity and the physical metric suite of~\cite{nikitinGEOMdrugsRevisitedMore2025}, which assesses differences in bond lengths, bond angles and torsion angles between generated molecules and their GFN2-xTB-optimized~\cite{bannwarthGFN2xTBAnAccurateBroadly2019} counterparts, alongside the associated mean and median relaxation energies.

\subsection{Unconditional generation}
\label{sec: unconditional generation}

\begin{table}[ht]
  \centering
   \footnotesize 
  \caption{Standard molecular generation metrics on QM9, computed on 10k generated samples per model. Results marked with * are from our own experiments. Unless otherwise noted in the subscript, models use 1000 NFEs.}
  \label{tab:qm9_results}
  \resizebox{\textwidth}{!}{ 
  \begin{tabular}{l c c c c c c}
    \toprule
    \textbf{Method} & \textbf{Atom $\uparrow$} & \textbf{Molecule $\uparrow$} & \textbf{Validity $\uparrow$} & \textbf{Uniqueness $\uparrow$} & \textbf{Valid \&} & \textbf{Novelty $\uparrow$} \\
    & \textbf{Stability} & \textbf{Stability} & & & \textbf{Unique $\uparrow$} & \\
    \midrule
    Data          & 99.40 & 95.30 & 97.70 & 99.97 & 97.60 & --    \\
    \midrule
    EDM           & 98.70 & 82.00 & 91.90 & --    & 90.70 & 65.70 \\
    EDM-Bridge    & 98.80 & 84.60 & 92.00 & --    & 90.70 & --    \\
    EquiFM$_{210}$        & 98.90 & 88.30 & 94.70 & --    & {93.50} & --    \\
    END           & 98.90 & 89.10 & 94.80 & --    & 92.60 & --    \\
    GEOLDM        & 98.90 & 89.40 & 93.80 & --    & 92.70 & 57.00* \\
    GOAT          & 99.20 & --    & 92.90 & 99.00 & --    & \textbf{78.60} \\
    GeoBFN$_{2000}$      & 99.31 & 93.32 & 96.88 & --    & \textbf{92.41} & 65.30 \\
    SLDM          & 99.43 & 95.42 & 97.07 & --    & 90.42 & --    \\
    SLDM*         & 99.07 & 92.32 & 95.53 & {95.28} & {91.02} & 42.46    \\
    \midrule
    EBMol$_{930}$ &  99.35 & 92.34 & 96.88 & 86.61 & 83.91 & 45.68 \\
    EBMol$_{1370}$ & 99.65 & 95.58 & 98.32 & 82.78 & 81.39 & 43.37 \\
    EBMol$_{1810}$ &  \textbf{99.76} & \textbf{97.09} & \textbf{98.93} & 82.30 & 81.42 & 42.08 \\
    \bottomrule
  \end{tabular}
  }
\end{table}
The results for unconditional generation on QM9 are summarized in Table~\ref{tab:qm9_results}. We benchmark EBMol for three different compute-settings, where the lowest setting of 930 network function evaluations (NFEs) matches the baseline models. The results show that EBMol$_{930}$ is competitive with the currently best bond-implicit model SLDM, while using approximately half the parameters. 

When scaling compute to 1810 NFEs, EBMol exceeds all baselines in stability and validity. This includes the strong baseline GeoBFN that also achieves high quality samples through inference-time compute adaptation. Uniqueness and novelty are lower, at ~82\% and ~42\%, respectively. This is expected on QM9 because the dataset exhaustively enumerates its composition space~\cite{reymondEnumerationChemicalSpace2012}, so higher-quality molecules are more likely to already appear in the training set and novel compositions tend to violate stability constraints~\cite{hoogeboomEquivariantDiffusionMolecule2022, wuDiffusionbasedMoleculeGeneration2022a}.
The quality--diversity tradeoff observed in Tables~\ref{tab:qm9_results} and \ref{tab:geom_results} is a direct consequence of the parallel tempering sampler concentrating probability mass on the lowest-energy modes. While this ensures high chemical validity, it naturally limits diversity.

Shifting towards the more challenging GEOM-Drugs benchmark, we report the revised metrics in Table~\ref{tab:geom_results}. At a compute-matched setting, EBMol already surpasses the SLDM baseline, a performance gap that widens substantially as compute scales. While we observe a slight decrease in uniqueness as stability improves, this shift is consistent with the expected quality--diversity tradeoff. The model still produces 96.4\% unique samples among 10k generations, suggesting a slow decay in diversity rather than a catastrophic mode collapse.
Randomly selected samples are visualized in Appendix~\ref{sec: generated samples}.

The physical metrics presented in Table~\ref{tab:geom_drugs_energy} clearly demonstrate the impact of the energy inductive bias. In the medium-compute configuration, EBMol$_{1960}$ already outperforms bond-implicit baselines and is the first such model to surpass the MMFF~\cite{halgrenMerckMolecularForce1996} force field. When compute is scaled, EBMol$_{7240}$ achieves a median energy decrease of 1.78 kcal/mol, an order of magnitude lower than previous bond-implicit models. Notably, EBMol remains competitive with the bond-explicit Megalodon (3.19 kcal/mol, 40.6M parameters), despite having 18$\times$ fewer parameters and no explicit bond supervision. This trend extends to bond length and angle distributions, where EBMol$_{7240}$ matches or exceeds Megalodon's performance.

\begin{table}[ht]
  \centering
   \footnotesize 
    \caption{Revised stability, validity, uniqueness, novelty, and diversity metrics on GEOM-Drugs, computed on 10k generated samples per model. Compute budgets (NFEs) are indicated in each method's subscript.}
  \label{tab:geom_results}
  \resizebox{\textwidth}{!}{
  \begin{tabular}{l c c c c c c}
    \toprule
    \textbf{Method} & \textbf{Atom $\uparrow$} & \textbf{Molecule $\uparrow$} & \textbf{Validity $\uparrow$} & \textbf{Uniqueness $\uparrow$} & \textbf{Novelty $\uparrow$}& \textbf{Diversity $\uparrow$} \\
    & \textbf{Stability} & \textbf{Stability} & & & & \\
    \midrule
    Data            & 99.60 & 88.62 & 97.91 & 97.89 & -- &  901.7  \\
    Noise           & 37.27 & 00.00 & 71.89 & 100.00 & 100.00 & 1873.8 \\ 
    \midrule
    EDM$_{1000}$     & 87.07 & 1.03  & 84.43 & \textbf{99.99} & 99.94 &  \textbf{1621.6}  \\
    GEOLDM$_{1000}$ & 91.61 & 6.08  & 86.93 & 99.96 & 99.88 &  1582.5  \\
    SLDM$_{1000}$   & 95.27 & 20.59 & 86.98 & 99.95 & 99.93 &  1474.0 \\
    \midrule
    EBMol$_{1080}$    & 95.86 & 24.68 & 95.36 & 99.67 & 99.69 & 1096.8 \\
    EBMol$_{1960}$   & 97.17 & 37.40 & 97.02 & 99.39 & 99.50 & 927.0 \\
    EBMol$_{3720}$   & 98.15 & 52.31 & 98.61 & 98.47 & 99.42 & 783.0\\
    EBMol$_{7240}$   & \textbf{98.68} & \textbf{61.42} & \textbf{99.03} & 96.42 & 99.34 & 645.0\\
    \bottomrule
  \end{tabular}
  }
\end{table}

\begin{table}[ht]
  \centering
    \caption{Structural and energy metrics on GEOM-Drugs, computed on valid and connected molecules only. Bond lengths (\AA), angles (degrees), energies (kcal/mol). The best bond-explicit models from GEOM-Drugs Revisited~\cite{nikitinGEOMdrugsRevisitedMore2025} are included for reference.}
  \label{tab:geom_drugs_energy}
  \resizebox{\textwidth}{!}{ 
  \begin{tabular}{l c c c c c c}
    \toprule
    \textbf{Method} & \textbf{Bond Lengths $\downarrow$} & \textbf{Bond Angles $\downarrow$} & \textbf{Torsions $\downarrow$} & \textbf{Avg Energy $\downarrow$} & \textbf{Med Energy $\downarrow$}  \\
    & \textbf{($\times 10^{-2}$)} & & & \textbf{Diff.} & \textbf{Diff.} \\
    \midrule
    Data                     & $0.00 ^{\pm 0.00}$ & $0.00 ^{\pm 0.00}$ & $0.00 ^{\pm 0.01}$ & $0.00 ^{\pm 0.01}$ & $0.00 ^{\pm 0.00}$  \\
    MMFF $\rightarrow$ GFN2-xTB & $1.12 ^{\pm 0.01}$ & $1.22 ^{\pm 0.00}$ & $4.89 ^{\pm 0.10}$ & $11.40 ^{\pm 0.20}$ & $9.84 ^{\pm 0.06}$  \\

    FlowMol2$_{100}$ (4.3M)          & $1.30 ^{\pm 0.04}$ & $1.62 ^{\pm 0.02}$ & $15.00 ^{\pm 0.30}$ & $24.30 ^{\pm 0.80}$ & $17.90 ^{\pm 0.50}$\\

    Megalodon$_{500}$ (40.6M)          & $0.66 ^{\pm 0.02}$ & $0.71 ^{\pm 0.01}$ & $5.58 ^{\pm 0.11}$ & $5.76 ^{\pm 0.27}$ & $3.19 ^{\pm 0.12}$\\
    \midrule
    EDM$_{1000}$ (2.2M)                 & $5.26 ^{\pm 0.03}$ & $5.73 ^{\pm 0.05}$ & $30.82 ^{\pm 0.18}$ & $175.34 ^{\pm 2.44}$ & $155.58 ^{\pm 1.87}$ \\
    GEOLDM$_{1000}$ (2.2M)              & $3.20 ^{\pm 0.11}$ & $3.90 ^{\pm 0.06}$ & $24.51 ^{\pm 0.19}$ & $99.25 ^{\pm 4.49}$ & $80.13 ^{\pm 2.13}$ \\
    SLDM$_{1000}$ (4.2M)       & $1.66 ^{\pm 0.03}$ & $1.34 ^{\pm 0.02}$ & $10.07 ^{\pm 0.08}$ & $23.77 ^{\pm 0.87}$ & $17.90 ^{\pm 0.30}$ \\
    \midrule
    EBMol$_{1080}$ (2.2M)      & $1.77 ^{\pm 0.02}$ & $1.67 ^{\pm 0.03}$ & $8.46 ^{\pm 0.13}$ & $40.77 ^{\pm 9.16}$ & $6.67 ^{\pm 0.30}$ \\
    EBMol$_{1960}$ (2.2M)      & $1.22 ^{\pm 0.04}$ & $1.19 ^{\pm 0.04}$ & $6.53 ^{\pm 0.19}$ & $26.13 ^{\pm 7.87}$ & $4.12 ^{\pm 0.07}$ \\
    EBMol$_{3720}$ (2.2M)      & $0.83 ^{\pm 0.01}$ & $0.81 ^{\pm 0.01}$ & $4.56 ^{\pm 0.07}$ & $10.36 ^{\pm 1.55}$ & $2.35 ^{\pm 0.05}$ \\
    EBMol$_{7240}$ (2.2M)      & $\mathbf{0.65 ^{\pm 0.01}}$ & $\mathbf{0.62 ^{\pm 0.01}}$ & $\mathbf{3.65 ^{\pm 0.06}}$ & $\mathbf{5.27 ^{\pm 0.39}}$ & $\mathbf{1.78 ^{\pm 0.04}}$ \\
    \bottomrule
  \end{tabular}
  }
\end{table}

\subsection{Composable and conditional generation}
\label{sec: conditional and composable}

\begin{figure}
    \centering
    \includegraphics[width=1\linewidth, trim={0 0.2cm 0 0.1cm}, clip]{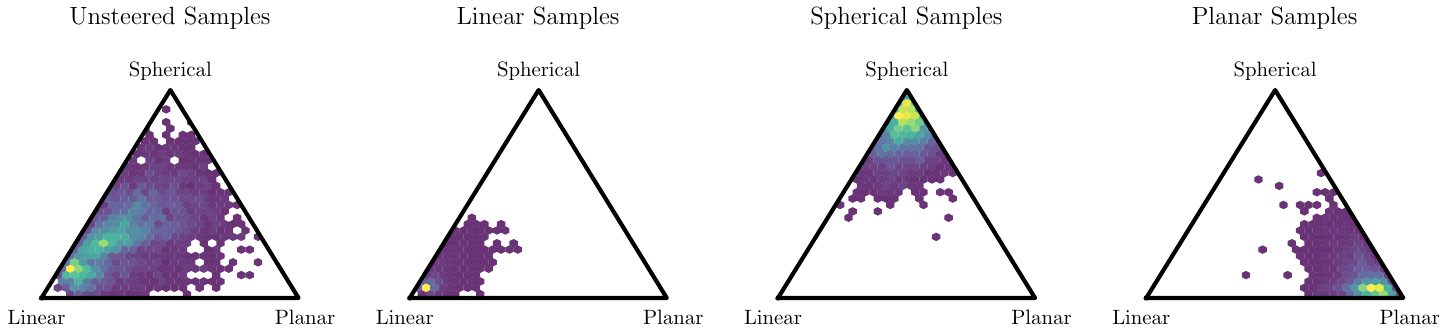}
    \caption{Shape distribution of 5000 generated molecules. Left: unsteered generation. Right: generation with composable shape potentials targeting linear, planar, and spherical geometries. Steering successfully redirects sampling toward underrepresented shapes.}
    \label{fig:shape steering}
\end{figure}

\paragraph{Shape-steered generation.} We compose the learned potential $E_\theta$ with a differentiable potential $U$ representing molecular shapes. This potential is defined via PCA eigenvalues of atom positions and targets either linear ($U_{\text{lin}}=\lambda_2 + \lambda_3$), planar ($U_{\text{disk}}=\lambda_3 + \lambda_1 - \lambda_2$) or spherical ($U_{\text{sphere}}=\lambda_1 - \lambda_3$) shapes during generation. Here $\lambda_1 \ge \lambda_2 \ge \lambda_3$ are the PCA eigenvalues of the covariance matrix of $\mathbf{c}$, measuring the stretch along each principal axis. For example, minimizing $U_\text{lin}= \lambda_2 + \lambda_3$ suppresses the two minor axes, thus collapsing the molecule onto a single axis.
Steered molecules successfully shift toward the target shape (Figure~\ref{fig:shape steering}) while maintaining reasonable stability and validity (Table~\ref{tab:conditional}), confirming that the learned energy landscape contains valid basins even for shapes underrepresented in the training data. 

\paragraph{Case study on linker design.} The additive energy composition naturally enables conditioning on arbitrary inputs, without any retraining. To test this, we conduct a linker design case study where we fix a subset of atoms and sample the remaining ones. We evaluate this zero-shot linker design by removing linking fragments between stable parts of molecules like ring structures and replacing the removed atoms with Gaussian noise matched to the removed atoms spatial statistics. We subsequently run a modified parallel tempering sampler for 2000 iterations. Following the evaluation protocol of~\cite{igashovEquivariant3DconditionalDiffusion2024}, EBMol achieves an average success rate of 73.9\% across six conditioning tasks of varying complexity (Table~\ref{tab:linker_design}), all without task-specific training.

\begin{table}[t]
\centering
\caption{Zero-shot conditional generation results. \textbf{Left:} Linker design across six problems of varying complexity, each evaluated over 100 attempts. Success requires a single connected component passing RDKit sanitization. \textbf{Right:} Shape-steered generation across three target shape classes alongside unsteered generation, each over 5000 samples. No task-specific training is required.} 
\label{tab:conditional}
\begin{minipage}[t]{0.50\textwidth}
\centering
\textit{Linker design}\\[2pt]
\resizebox{\textwidth}{!}{
\begin{tabular}{l c c c}
\toprule
\textbf{Problem} & \textbf{Free} & \textbf{Succ.\ $\uparrow$} & \textbf{Uniq.\ $\uparrow
$} \\
&\textbf{Atoms}&&\\
\midrule
P1 (easy)         & 5         & 95.2 & 16.4 \\
P2 (medium)         & 7         & 77.9 & 48.1 \\
P3 (medium)       & 12        & 88.5 & 84.5 \\
P4 (hard, 3 frags)& 10        & 55.8 & 51.5 \\
P5 (hard, 2 locs) & 5 + 5     & 75.0 & 83.3 \\
P6 (hard, 2 locs) & 8 + 6     & 51.0 & 93.3 \\
\midrule
Average           &           & 73.9 & 62.85\\
\bottomrule
\end{tabular}
}
\label{tab:linker_design}
\end{minipage}
\hfill
\begin{minipage}[t]{0.49\textwidth}
\centering
\textit{Shape-steered generation}\\[2pt]
\resizebox{\textwidth}{!}{
\begin{tabular}{l c c c}
\toprule
\textbf{Target} & \textbf{Mol.\ } & \textbf{Bond } & \textbf{Validity $\uparrow$} \\
&\textbf{Stab.\ $\uparrow$}& \textbf{Stab.\ $\uparrow$}&\\
\midrule
Linear     & 20.76 & 95.57 & 88.28 \\
Planar     &  8.96 & 92.90 & 88.80 \\
Spherical  & 31.66 & 96.59 & 96.70 \\
\midrule
Unsteered  & 37.52 & 97.09 & 96.68 \\
\bottomrule
\end{tabular}
}
\label{tab:shape_steering}
\end{minipage}
\end{table}

\subsection{Physical meaningfulness of the learned energy}
\label{sec: physically meaningful}
The energy inductive bias not only improves generation quality but produces a physically meaningful scoring function. We evaluate the correlation between EBMol energy and the energy decrease upon GFN2-xTB~\cite{bannwarthGFN2xTBAnAccurateBroadly2019} relaxation for molecules generated by the independently trained EDM (Figure~\ref{fig: energy correlation} in Appendix~\ref{sec:energy_correlation_appendix}). The correlation ($r = 0.65, p<0.0001$) emerges despite EBMol never receiving any energy-related supervision: the model is trained solely on atomic positions and atom types, yet learns to assign high energy to physically strained geometries. As a result, $E_\theta$ can serve as a lightweight first-pass filter for screening generated molecules by physical quality, providing order-of-magnitude speedups over GFN2-xTB optimization when screening large molecular batches.

\section{Conclusion and limitations}
\label{sec:conclusion}
 
\paragraph{Contributions.} EBMol demonstrates the potential of energy-based models to serve as a versatile and practical framework for 3D molecular generation, achieving state-of-the-art results on QM9 and GEOM-Drugs while providing critical capabilities that standard distribution-matching frameworks lack: energy-based ranking, steering via potential composition, and conditional generation from a single model. The core components are Restoring Field Matching, which shapes a scalar energy landscape without simulation, and Mirror-Langevin sampling, unifying continuous and discrete updates within a principled framework. Perhaps most notably, EBMol generates structures surprisingly close to physical local minima: its median relaxation energy of 1.78 kcal/mol is an order of magnitude below previous bond-implicit methods. This performance is competitive with bond-explicit models at 18$\times$ the parameter count, despite EBMol requiring no bond or energy supervision.

\paragraph{Limitations.} While we establish the design space for learning an energy landscape combined with Mirror-Langevin sampling, we do not systematically explore advanced sampler designs beyond the three algorithms in our sampler ablation (Table~\ref{tab:sampling_ablation}). Currently, achieving the best results requires a significant number of NFEs, and the employed sampling algorithms are sensitive to hyperparameter choices. On the theoretical side, our analysis establishes well-posedness of the training problem and characterizes the solution at the regularization extremes, but the claim that data points correspond to local minima in the learned landscape is motivated by the target construction and verified only empirically.
 
\paragraph{Outlook.}  A natural extension to EBMol is integrating energy calibration by replacing the zero-anchor objective with a loss that aligns $E_{\theta}(\mathbf{x}_0)$ to a physical quantity such as DFT or GFN2-xTB energy values. A calibrated energy would also enable coupled generative-simulation workflows in which the same $E_\theta$ both generates candidate molecules and serves as an approximate molecular dynamics potential. On the sampling side, we expect significant improvements from more sophisticated samplers such as accelerated Langevin methods~\cite{falkInertialLangevinAlgorithm2025, leimkuhlerContractionConvergenceRates2024} for faster sampling or from repulsive interaction potentials~\cite{balcerakEnergyMatchingUnifying2025b} for improved sample diversity.
\bibliographystyle{unsrt}
{\small
    \bibliography{references}
}
\appendix
\section{Proofs}\label{sec:appendix_proofs}

In this section we provide rigorous formulations and proofs of the theoretical results in~\cref{sec:analysis}. Let us first fix some notation.
We denote the marginal densities of $\mathbf x_t$ by $p_t = \mathrm{law}(\mathbf x_t)$ and $p_x(\mathbf x) = \int p_t(\mathbf x)\, p(t)\,\mathrm{d}t$. We denote the domain of $\mathbf x$ as $\Omega = \mathbb{R}^{N \times 3} \times \Delta^{K-1}$. We define the Sobolev space $W^{1,2}(\Omega) = \{f\in L^2(\Omega)\,|\, \nabla f\in L^2(\Omega)\}$ where the gradient is understood in the usual weak sense.
Moreover, as the underlying solution space for our training problem we define the weighted Sobolev space
\begin{equation}
    \mathcal{W}^{1,2}(\Omega) = \Bigl\{
        f : \Omega \to \mathbb{R}
        \;\Big|\;
        \nabla f \text{ exists weakly and }
        \int_\Omega \bigl(|f(\mathbf x)|^2 + |\nabla f(\mathbf x)|^2\bigr)\,
        p_x(\mathbf x)\,\mathrm{d}\mathbf x < \infty
    \Bigr\}.
\end{equation}
We impose the following assumptions on the occurring distributions, respectively their densities.

\begin{assumption}\label{ass}\
    \begin{enumerate}
        \item The domain $\dom(p_x)$ is connected, $p_x$ is continuous, bounded above and strictly positive on $\Omega$. Moreover, $p_x$ satisfies a Poincar\'e inequality, that is, there exists $C>0$ such that 
        \begin{equation}
            \|f - \bar f\|_{L^2(p_x)}\dd x\leq C\|\nabla f\|_{L^2(p_x)},\quad f\in\W^{1,2}(\Omega)\tag{PI}
        \end{equation}
        where $\bar f = \int f(\mathbf x) p_x(\mathbf x)\dd \mathbf x$.
        \item It holds $p_0\ll p_x$ with $\frac{p_0}{p_x}$ locally bounded. Moreover, there exists $A\subset\R^d$ such that for $\mathbf x\in A$, $p_0(\mathbf x)=0$ implies $p_x(\mathbf x)=0$ and $\int_A p_x(\mathbf x)\dd \mathbf x>0$.
    \end{enumerate}
\end{assumption}

\begin{remark}
    Let us briefly comment on~\cref{ass}. 
    \begin{enumerate}
        \item In the first assumption the most interesting part is the Poincar\'e inequality. This assumptions appears frequently in the context of diffusion processes and Langevin sampling~\cite{vempala2019rapid}. A Poincar\'e inequality is, in particular, satisfied for a density $p$, if
        \[
            \langle\nabla\log p(\mathbf x),\mathbf x\rangle\leq -a\|\mathbf x\| + b
        \]
        for some $a>0$, $b\in \R$~\cite{bakry2008simple}. That is, the Poincar\'e inequality imposes in some sense sufficiently fast decaying tails on the density.
        \item Regarding the second assumption, $p_0\ll p_x$ is to be expected, as $p_x$ is effectively a perturbed version of $p_0$. The converse absolute continuity on the set $A$ is, for instance, satisfied if there exists a set $A$ of positive measure $p_x$ with $p_0>0$ on $A$.
    \end{enumerate}
\end{remark}
Before considering the main results, we prove the following equivalence of norms in $\W^{1,2}(\Omega)$.
\begin{lemma}\label{lemma:appendix:poincare}
    Under~\cref{ass} for every $u\in \W^{1,2}(\Omega)$ we have 
    \begin{equation}
        \|u\|_{L^2(p_x)}\leq \|u\|_{L^2(p_0)} +  \|\nabla u\|_{L^2(p_x)}.
    \end{equation}
\end{lemma}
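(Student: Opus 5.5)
The plan is to split $u$ into its mean and its fluctuation, control the fluctuation with the Poincar\'e inequality (PI), and control the mean through the values of $u$ on the set $A$ from the second part of \cref{ass}, where $p_0$ dominates $p_x$. I read the inequality as holding up to a multiplicative constant depending only on $p_0$, $p_x$, $A$ and the Poincar\'e constant $C$. As literally written with constant $1$ it cannot hold in general. Constants $u\equiv c$ give equality, and a non-constant $u$ with $\nabla u$ small in $L^2(p_x)$ but large fluctuation would violate a constant-free version. So the target is
\[
\|u\|_{L^2(p_x)}\leq C_1\|u\|_{L^2(p_0)} + C_2\|\nabla u\|_{L^2(p_x)},
\]
and I will keep track of $C_1, C_2$.

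\textbf{Step 1 (fluctuation).} Write $u = (u-\bar u) + \bar u$ with $\bar u = \int u\,p_x$, which is finite since $u\in\W^{1,2}(\Omega)$ and $p_x$ is a probability density. Using $\int p_x = 1$ and (PI),
\[
\|u\|_{L^2(p_x)} \le \|u-\bar u\|_{L^2(p_x)} + |\bar u| \le C\|\nabla u\|_{L^2(p_x)} + |\bar u|.
\]

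\textbf{Step 2 (mean via $A$).} First shrink $A$ to a bounded set $A'\subset A$ that still has positive $p_x$-mass and on which $p_0\ge c>0$. This is possible because on $A$ we have $p_x>0\Rightarrow p_0>0$, so $A\cap\{p_0>0\}$ has positive $p_x$-mass. Intersecting with $\{p_0\ge 1/n\}$ and a large ball then keeps positive $p_x$-mass for $n$ large, by continuity of measure. Set $m=\int_{A'}p_x>0$. Then
\[
m^{1/2}|\bar u| = \|\bar u\|_{L^2(p_x\mathbf 1_{A'})} \le \|u\|_{L^2(p_x\mathbf 1_{A'})} + \|u-\bar u\|_{L^2(p_x)}.
\]
The last term is again bounded by $C\|\nabla u\|_{L^2(p_x)}$ via (PI). For the first term, $p_x\le M:=\sup p_x<\infty$ by \cref{ass}, and $p_0\ge c$ on $A'$, so
\[
\|u\|_{L^2(p_x\mathbf 1_{A'})}\le (M/c)^{1/2}\|u\|_{L^2(p_0)}.
\]

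\textbf{Step 3 (combine).} Putting the three estimates together gives the claim with $C_1=(M/(cm))^{1/2}$ and $C_2=C(1+m^{-1/2})$.

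The main obstacle is Step 2. The naive route bounds $|\bar u|$ by $\|u\|_{L^2(p_0)}+\|u-\bar u\|_{L^2(p_0)}$, but that requires $p_0/p_x$ to be globally bounded, whereas \cref{ass} only gives local boundedness. This is exactly why the argument must localize to a set where $p_0$ is bounded below. The measure-theoretic extraction of $A'$ from the hypothesis on $A$ is the delicate point. If it fails, the fallback is a compactness-by-contradiction argument using local Rellich compactness on bounded subsets together with the Poincar\'e inequality.
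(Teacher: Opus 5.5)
Your proof is correct, and it takes a genuinely different route from the paper's. You are also right that the constant $1$ cannot be correct in general: rescaling the coordinates $\mathbf c$ shrinks only the gradient term. The paper's own proof starts from a sequence with $\|u_n\|_{L^2(p_x)}\geq n(\|u_n\|_{L^2(p_0)}+\|\nabla u_n\|_{L^2(p_x)})$, so it too establishes only the version with a constant. That version is all the existence proof uses, since it only needs boundedness of minimizing sequences.

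\textbf{The paper's route.} It argues by contradiction:
\begin{enumerate}
\item It normalizes $v_n=u_n/\|u_n\|_{L^2(p_x)}$ and extracts weak limits in $L^2(p_x)$. Continuity and positivity of $p_x$ identify these as a $W^{1,2}_{\mathrm{loc}}$ limit $v$ with $\nabla v=0$.
\item By (PI), $v$ is a constant.
\item It argues that $v=0$ holds $p_0$-a.e.; this implicitly requires relating weak limits in $L^2(p_x)$ and $L^2(p_0)$.
\item The set $A$ then forces the constant to be $0$.
\item A second application of (PI), together with $\bar v_{n_k}\to 0$, gives $\|v_{n_k}\|_{L^2(p_x)}\to 0$, which contradicts the normalization.
\end{enumerate}

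\textbf{What your route buys.} Your mean/fluctuation estimate is the quantitative unrolling of the same two ingredients: (PI) controls the fluctuation, and the set $A$ pins down the mean. In exchange you get explicit constants $C_1=(M/(cm))^{1/2}$ and $C_2=C(1+m^{-1/2})$. You also need less: only (PI), $\sup p_x<\infty$, and the property of $A$, but not continuity of $p_x$ or the local bound on $p_0/p_x$. You sidestep the two delicate points of the paper's argument, namely identifying weak limits across the two weighted spaces and the $0/0$ ratio $p_x/p_0$ that appears on $A$. Your extraction of $A'$ via $A\cap\{p_0\geq 1/n\}\cap B_n\uparrow A\cap\{p_0>0\}$ is sound; the restriction to a ball is harmless but not needed.

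Your Rellich fallback is unnecessary. The paper's contradiction argument uses only weak compactness in $L^2(p_x)$, never strong compactness.
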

\begin{remark}
    The important aspect above is that the first L2 norm on the right-hand side is with respect to $p_0$.
\end{remark}
\begin{proof}
    Assume to the contrary, we find a sequence $(u_n)_n\subset \W^{1,2}(\Omega)$ with 
    \begin{equation}\label{eq:appendix_poincare}
        \|u_n\|_{L^2(p_x)}\geq n ( \|u_n\|_{L^2(p_0)} +  \|\nabla u_n\|_{L^2(p_x)}).
    \end{equation}
    Define
    \begin{equation}
        v_n = \frac{u_n}{\|u_n\|_{L^2(p_x)}}
    \end{equation}
    so that $\|v_n\|_{L^2(p_x)}=1$ and by~\eqref{eq:appendix_poincare}
    \begin{equation}\label{eq:appendix_poincare2}
        \frac{1}{n}\geq  ( \|v_n\|_{L^2(p_0)} +  \|\nabla v_n\|_{L^2(p_x)}).
    \end{equation}
    In particular, we can extract a subsequence $(v_{n_k})_k$ such that 
    \begin{equation}
        \begin{aligned}
            v_{n_k}&\rightharpoonup v\\
            \nabla v_{n_k}&\rightharpoonup g
        \end{aligned}
    \end{equation}
    both in $L^2(p_x)$ and by the assumed positivity and continuity of $p_x$ we find in particular, that $v_{n_k}\rightharpoonup v$ in $W^{1,2}_{\mathrm{loc}}(\Omega)$, that is, with respect to the regular Lebesgue measure, and $g=\nabla v$. Moreover, $v\in \W^{1,2}(\Omega)$.
    Now,~\eqref{eq:appendix_poincare2} implies that $\nabla v_{n_k}\rightarrow 0$ and since weak limits are unique we have $\nabla v = g = 0$. The Poincar\'e inequality, \textit{cf.}~\cref{ass}, then implies
    \[
        \|v-\bar v\|_{L^2(p_x)}\leq C\|\nabla v\|_{L^2(p_x)}=0,
    \]
    that is, $v(\mathbf x)$ is equal to a constant, say $c\in\R$, $p_x$-a.e. Similarly, ~\eqref{eq:appendix_poincare2} yields $v_{n_k}\rightarrow 0$ in $L^2(p_0)$ so that $v(\mathbf x) = 0$ for $p_0$-a.e. $\mathbf x$. Using the set $A$ from~\cref{ass} we find
    \begin{equation}
        c\int_{A}p_x(\mathbf x)\dd \mathbf x = \int_{A} v(\mathbf x)\frac{p_x(\mathbf x)}{p_0(\mathbf x)} p_0(\mathbf x)\dd \mathbf x = 0
    \end{equation}
    where the last equality follows from $v(\mathbf x) = 0$, $p_0$-a.e. Since $\int_{A}p_x(\mathbf x)\dd \mathbf x>0$ it follows $c=0$, that is, $v=0$, $p_x$-a.e. Thus, we have that $v_{n_k}\rightharpoonup 0$ in $L^2(p_x)$. Since constants are elements of $L^2(p_x)$ this also implies that
    \begin{equation}
        \bar v_{n_k} = \int v_{n_k}(\mathbf x)p_x(\mathbf x)\dd \mathbf x \rightarrow 0.
    \end{equation}
    Then, the Poincar\'e inequality implies
    \begin{equation}
        \begin{aligned}
            \|v_{n_k}\|_{L^2(p_x)} 
            \leq& \|v_{n_k} - \bar v_{n_k}\|_{L^2(p_x)} + \|\bar v_{n_k} \|_{L^2(p_x)}\\
            \leq& \|v_{n_k} - \bar v_{n_k}\|_{L^2(p_x)} + \bar v_{n_k}\\
            \leq& C \|\nabla v_{n_k} \|_{L^2(p_x)} + \bar v_{n_k}\rightarrow 0
        \end{aligned}
    \end{equation}
    We, therefore, find
    \begin{equation}
        1 = \|v_{n_k}\|_{L^2(p_x)}\rightarrow 0
    \end{equation}
    which is a contradiction.
\end{proof}

We can now prove existence and uniqueness of solutions of the training problem. Let us first rigorously reformulate the result.
\begin{proposition}[Existence of solutions]\label{thm:existence}
    Under~\cref{ass} the problem 
    \[
        \min_{E\in \W^{1,2}(\Omega)} \Lc(E)
    \]
    admits a unique solution.
\end{proposition}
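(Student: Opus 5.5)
The plan is to use the direct method of the calculus of variations in the Hilbert space $\W^{1,2}(\Omega)$, together with strict convexity for uniqueness. First I rewrite the objective as a functional of $E$ alone, integrating against the marginal $p_x$. By the tower property,
\[
\Lc_\text{RFM}(E)=\int_\Omega \bigl\|\nabla E(\mathbf x)+\bar{\mathbf u}(\mathbf x)\bigr\|^2 p_x(\mathbf x)\,\dd\mathbf x + \text{const},\qquad \bar{\mathbf u}(\mathbf x)=\mathbb{E}[\mathbf u\,|\,\mathbf x_t=\mathbf x].
\]
Here the constant is the conditional variance of $\mathbf u$, which does not depend on $E$. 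I assume $\mathbb{E}\|\mathbf u\|^2<\infty$; this holds since $b\le 1$ and the prior and data have second moments. It gives $\bar{\mathbf u}\in L^2(p_x)$.

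For the regularizer, I treat $E$ through its per-atom decomposition, or simply as $\mathbb{E}_{p_0}[E^2]$ up to the normalization. Up to constants this is $\lambda\|E\|_{L^2(p_0)}^2$. Since $p_0\ll p_x$ with locally bounded density ratio, this term is well defined on $\W^{1,2}(\Omega)$, at least after restricting to where it is finite. So
\[
\Lc(E)=\|\nabla E+\bar{\mathbf u}\|_{L^2(p_x)}^2+\lambda\|E\|_{L^2(p_0)}^2+\text{const}.
\]

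\emph{Coercivity.} Lemma~\ref{lemma:appendix:poincare} gives
\[
\|E\|_{L^2(p_x)}\le \|E\|_{L^2(p_0)}+\|\nabla E\|_{L^2(p_x)}.
\]
Combined with $\|\nabla E\|_{L^2(p_x)}\le \|\nabla E+\bar{\mathbf u}\|_{L^2(p_x)}+\|\bar{\mathbf u}\|_{L^2(p_x)}$, this shows that sublevel sets of $\Lc$ are bounded in $\W^{1,2}(\Omega)$ whenever $\lambda>0$. A minimizing sequence $(E_n)$ is therefore bounded, and I extract a subsequence with $E_n\rightharpoonup E$ and $\nabla E_n\rightharpoonup g$ in $L^2(p_x)$. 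Exactly as in the proof of Lemma~\ref{lemma:appendix:poincare}, positivity and continuity of $p_x$ give weak convergence in $W^{1,2}_{\mathrm{loc}}$ and hence $g=\nabla E$, so $E\in\W^{1,2}(\Omega)$.

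\emph{Lower semicontinuity.} I also need $E_n\rightharpoonup E$ in $L^2(p_0)$. Boundedness in $L^2(p_0)$ comes from the sublevel bound, so a further subsequence converges weakly there. The limits agree because $p_0/p_x$ is locally bounded: testing against compactly supported functions identifies them. Both terms of $\Lc$ are continuous convex quadratics, hence weakly lower semicontinuous, so $E$ is a minimizer.

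\emph{Uniqueness.} Suppose $E_1,E_2$ are both minimizers. The map $E\mapsto\|\nabla E+\bar{\mathbf u}\|^2$ is convex and $E\mapsto\lambda\|E\|_{L^2(p_0)}^2$ is strictly convex, up to $p_0$-null sets. Convexity of $\Lc$ and equality of the minimal values then force $\nabla E_1=\nabla E_2$ $p_x$-a.e. and $E_1=E_2$ $p_0$-a.e. Applying Lemma~\ref{lemma:appendix:poincare} to $u=E_1-E_2$ gives $\|E_1-E_2\|_{L^2(p_x)}\le 0$, so the minimizer is unique.

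\emph{Main obstacle.} The delicate step is the compactness and identification argument, and it needs care in two places. First, the $p_0$-weighted term must be handled on a space weighted by $p_x$; this is where the second part of Assumption~\ref{ass} is used, and where I may need to add finiteness of $\|E\|_{L^2(p_0)}$ to the admissible class. Second, the edge case $\lambda=0$ is genuinely different: there the functional only sees $\nabla E$ and is invariant under adding constants, so uniqueness can hold only modulo constants. I would state the result for $\lambda>0$, or for $\lambda=0$ mod constants, which follows from the Poincar\'e inequality (PI) alone.
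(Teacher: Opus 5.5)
Your proposal is correct and follows the paper's proof essentially step for step. A minimizing sequence is bounded via Lemma~\ref{lemma:appendix:poincare}, and its weak limits in $L^2(p_x)$ are identified as an element of $\W^{1,2}(\Omega)$ with the right gradient, with a further subsequence converging weakly in $L^2(p_0)$ to the same limit; weak lower semicontinuity then gives existence and strict convexity gives uniqueness. Your additions make precise points the paper leaves implicit: uniqueness via $\nabla(E_1-E_2)=0$ $p_x$-a.e. and $E_1=E_2$ $p_0$-a.e. plus the Lemma, the $L^2(p_0)$ limit identified through compactly supported test functions since $p_0/p_x$ is only locally bounded, and the restriction to $\lambda_{\text{reg}}>0$, because at $\lambda_{\text{reg}}=0$ the minimizer is unique only modulo constants.
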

\begin{proof}
    Denoting $\mathbf{u} = (\mathbf{u}^{\mathbf{c}},\mathbf{u}^{\mathbf{p}})$ we can rewrite the objective as
    \begin{equation}
        \begin{aligned}
            \Lc (E)
            = \iiint &\biggl\{\bigl\| -\nabla E_{\theta}(\mathbf{x}_t) - \mathbf{u} \bigr\|^2 + \lambda_{\text{reg}}E(\mathbf{x}_0)^2\biggr\} p(\mathbf x_0,\mathbf x_t,t) \dd \mathbf x_t\dd \mathbf x_0\dd t\\
            = \iiint &\bigl\| -\nabla E_{\theta}(\mathbf{x}_t) - \mathbf{u} \bigr\|^2 p(\mathbf x_0,\mathbf x_t,t)\dd \mathbf x_t\dd \mathbf x_0\dd t + \lambda_{\text{reg}} \int E(\mathbf x)^2 p_0(\mathbf x) \dd \mathbf x
        \end{aligned}
    \end{equation}
    Now let $(E_n)_n\subset \W^{1,2}(\Omega)$ be a minimizing sequence, that is, 
    \[
        \lim_{n\rightarrow\infty}\Lc(E_n) = \inf_{E\in \W^{1,2}(\Omega)}\Lc(E).
    \]
    By definition of $\Lc$ it follows that $\|E_n\|_{L^2(p_0)}$ and $\|\nabla E_n\|_{L^2(p_x)}$ are bounded. The Poincar\'e inequality, \textit{cf.}~\cref{lemma:appendix:poincare} then yields that also $\|E_n\|_{L^2(p_x)}$ is bounded so that we can extract a subsequence $(E_{n_k})_k$ and functions $E^*\in L^2(p_x)$, $g\in L^2(p_x)$ such that $E_{n_k}\rightharpoonup E^*$, $\nabla E_{n_k}\rightharpoonup g$ both in $L^2(p_x)$. Moreover, by the assumed properties of $p_x$ we also find that the weak convergence is in $W^{1,2}(\Omega)$ and that $\nabla E^* = g$ and $E^*\in \W^{1,2}(\Omega)$. By potentially extracting another subsequence, $E_{n_k}$ converges also weakly in $L^2(p_0)$ and the limit is again $E^*$ due to the assumed boundedness of $p_0/p_x$ as the latter implies that weak convergence in $L^2(p_x)$ includes also weak convergence in $L^2(p_0)$.
    Note that 
    \begin{equation}
        G\mapsto \iiint \bigl\| -G(\mathbf{x}_t) - \mathbf{u} \bigr\|^2 p(\mathbf x_0,\mathbf x_t,t)\dd \mathbf x_t\dd \mathbf x_0\dd t.
    \end{equation}
    is convex and continuous in $L^2(p_x)$ and therefore also lower semi-continuous with respect to weak convergence in $L^2(p_x)$. Similarly,
    \begin{equation}
        E\mapsto \iiint \lambda_{\text{reg}}E(\mathbf{x}_0)^2 p(\mathbf x_0)\dd \mathbf x_0
    \end{equation}
    is weakly lower semi-continuous in $L^2(p_0)$. Thus,
    \begin{equation}
        \Lc(E^*) \leq \liminf_{n\rightarrow\infty}\Lc(E_n) = \inf_{E}\Lc(E)
    \end{equation}
    showing existence of a solution. Uniqueness is a consequence of strict convexity of the objective.
\end{proof}

Having established well-posedness we can consider the concise formulation and proof of \cref{prop:characterize_solutions}.
\begin{proposition}[Convergence for small and large regularization parameters]
Under~\cref{ass}, let $E^*(x,\lambda)$ be the minimizer of $\Lc$ for $\lambda_{\text{reg}} = \lambda$. It holds true that 
\begin{enumerate}
\item[(i)] There exists a weak accumulation point $E^*_\infty\in \W^{1,2}(\Omega)$ of $E^*(\cdot,\lambda)$ as $\lambda\rightarrow \infty$ and every such accumulation point satisfies $E^*_\infty(x) = 0$ for $p_0$-a.e. $x$.
\item[(ii)] There exists $E^*_0\in \W^{1,2}(\Omega)$ such that $\nabla E^*_0$ is a weak accumulation point of $\nabla E^*(\cdot,\lambda)$ as $\lambda\rightarrow 0$ and every such $E^*_0$ minimizes
    \begin{equation}
        \W^{1,2}(\Omega)\ni G\mapsto \iiint \bigl\| -\nabla G(\mathbf x_t) - \mathbf{u} \bigr\|^2 p(\mathbf x_0,\mathbf x_t,t)\dd \mathbf x_t\dd \mathbf x_0\dd t,
    \end{equation}
    that is, $\nabla E^*_0$ is precisely the projection of $\mathbf{u}$ onto the space of all gradient fields. Moreover, if there exists $F\in\W^{1,2}(\Omega)$ with 
    \begin{equation}
        \nabla F (x) = -\mathbb{E}[\mathbf{u}|\mathbf x_t=\mathbf x],\quad p_x-\text{a.e. $\mathbf x$}.
    \end{equation}
    then $\nabla E^*(\mathbf x) = -\mathbb{E}[\mathbf{u}|\mathbf x_t=\mathbf x]$, $p_x$-a.e.
\item[(iii)] If, in addition it holds true that $v(\mathbf x_0,\mathbf x_1,t) = v(\mathbf x_t)$, that is, $v$ depends solely on $\mathbf x_t$, then there exists a weak accumulation point $E^*_0\in \W^{1,2}(\Omega)$ of $E^*(\cdot,\lambda)$ as $\lambda\rightarrow 0$ and every such accumulation point satisfies $\nabla E^*_0 = -\mathbf{u}$, $p_x$-a.e.
\end{enumerate}
\end{proposition}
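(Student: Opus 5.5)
The approach is to compare the minimizer $E_\lambda := E^*(\cdot,\lambda)$ with a few fixed competitors, obtain uniform bounds in $\W^{1,2}(\Omega)$, and pass to weak limits using the weak lower semicontinuity already established in the proof of \cref{thm:existence}. Throughout, write $\Lc_\lambda(E) = R(E) + \lambda\|E\|_{L^2(p_0)}^2$, where
\[
R(E) = \iiint \bigl\| -\nabla E(\mathbf x_t) - \mathbf u\bigr\|^2 p(\mathbf x_0,\mathbf x_t,t)\dd\mathbf x_t\dd\mathbf x_0\dd t .
\]
By the tower property, $R(E) = \|\nabla E + m\|_{L^2(p_x)}^2 + \mathrm{const}$ with $m(\mathbf x) = \mathbb{E}[\mathbf u\,|\,\mathbf x_t=\mathbf x]$. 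Consequently $R$ depends only on $\nabla E$, and minimizing $R$ amounts to the $L^2(p_x)$-projection of $-m$ onto the closed subspace $\{\nabla G : G\in\W^{1,2}(\Omega)\}$. Closedness of this subspace follows from \cref{lemma:appendix:poincare} together with the Poincar\'e inequality, by the same compactness argument used for existence.

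\emph{Part (i).} I would test against $E\equiv 0$. This gives $R(E_\lambda) + \lambda\|E_\lambda\|_{L^2(p_0)}^2 \le R(0) =: C_0$. Hence $\|\nabla E_\lambda\|_{L^2(p_x)}$ is bounded uniformly in $\lambda$, and $\|E_\lambda\|_{L^2(p_0)}^2 \le C_0/\lambda$. For $\lambda\ge 1$, \cref{lemma:appendix:poincare} then bounds $\|E_\lambda\|_{L^2(p_x)}$, so $(E_\lambda)$ is bounded in $\W^{1,2}(\Omega)$. As in \cref{thm:existence}, this yields a weak accumulation point $E^*_\infty\in\W^{1,2}(\Omega)$, with the weak convergence also holding in $L^2(p_0)$ by local boundedness of $p_0/p_x$. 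Weak lower semicontinuity of the norm gives $\|E^*_\infty\|_{L^2(p_0)} \le \liminf \|E_\lambda\|_{L^2(p_0)} = 0$, so $E^*_\infty = 0$ $p_0$-a.e.

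\emph{Part (ii).} Let $G^*$ be any minimizer of $R$, which exists by the projection argument. Testing $\Lc_\lambda$ against $G^*$ gives
\[
R(E_\lambda) + \lambda\|E_\lambda\|_{L^2(p_0)}^2 \le R(G^*) + \lambda\|G^*\|_{L^2(p_0)}^2 .
\]
This bounds $\|\nabla E_\lambda\|_{L^2(p_x)}$, so $\nabla E_\lambda$ has weak accumulation points as $\lambda\to0$. Each such point $g$ lies in the weakly closed subspace of gradients, so $g = \nabla E^*_0$ for some $E^*_0\in\W^{1,2}(\Omega)$. Lower semicontinuity then gives $R(E^*_0)\le\liminf R(E_\lambda)\le R(G^*)$, so $E^*_0$ minimizes $R$. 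Since $R$ is strictly convex in the gradient, $\nabla E^*_0$ is the unique projection, and in fact the whole family $\nabla E_\lambda$ converges weakly to it. If some $F$ satisfies $\nabla F = -m$, then $R(F)$ attains the absolute lower bound, so the projection equals $-m$, i.e. $\nabla E^*_0 = -m$ $p_x$-a.e. I read the statement's last identity as referring to this limit $E^*_0$.

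\emph{Part (iii).} The new ingredient is a bound on $E_\lambda$ itself, not only its gradient. The constant is not controlled by the previous bounds, and the bound $\lambda\|E_\lambda\|^2_{L^2(p_0)} \le C$ degenerates as $\lambda\to0$. I plan to use first-order optimality:
\[
\langle \nabla E_\lambda + m,\nabla h\rangle_{L^2(p_x)} + \lambda\langle E_\lambda, h\rangle_{L^2(p_0)} = 0 \quad\text{for all } h\in\W^{1,2}(\Omega).
\]
Taking $h\equiv1$ gives $\int E_\lambda\, p_0 = 0$, which pins the constant. Combining this with the Poincar\'e inequality and \cref{lemma:appendix:poincare} should give a uniform $\W^{1,2}$ bound, and hence weak accumulation points $E^*_0$. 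When $\mathbf u$ depends only on $\mathbf x_t$, we have $m=\mathbf u$. If a potential of $-\mathbf u$ exists, part (ii) then yields $\nabla E^*_0 = -\mathbf u$ $p_x$-a.e.

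I expect the main obstacle to be exactly this zeroth-order control in (iii), namely showing that the mean-zero condition under $p_0$ suffices to bound $\|E_\lambda\|_{L^2(p_x)}$. My first attempt would be a contradiction argument mirroring the proof of \cref{lemma:appendix:poincare}, using the set $A$ from \cref{ass}. A second concern is that (iii) as stated implicitly requires $\mathbf u$ to be conservative; if it is not, the conclusion must be read as the projection statement from (ii).
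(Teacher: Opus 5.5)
\paragraph{Parts (i) and (ii).} These essentially match the paper. For (i), both you and the paper test against $E\equiv 0$, then use Lemma~\ref{lemma:appendix:poincare} and weak lower semicontinuity. Your (ii) packages the paper's argument as an $L^2(p_x)$-projection onto the closed subspace of gradients and compares with a minimizer $G^*$. The paper instead centres $E^*(\cdot,\lambda)$ via the Poincar\'e inequality and compares with an arbitrary $G$. The two are equivalent, and your version additionally gives weak convergence of the whole family $\nabla E_\lambda$. Your remark that (iii) needs $\mathbf u$ to be conservative also matches the paper's proof, which assumes a potential $F$ exists.

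\paragraph{The gap in (iii).} The one real gap is the uniform zeroth-order bound, which you leave open. Your reason for needing a new ingredient is mistaken: only the comparison with $0$ degenerates as $\lambda\to 0$. The inequality you already wrote in (ii),
\[
R(E_\lambda) + \lambda\|E_\lambda\|_{L^2(p_0)}^2 \le R(G^*) + \lambda\|G^*\|_{L^2(p_0)}^2 ,
\]
combined with $R(G^*)\le R(E_\lambda)$ (since $G^*$ minimizes $R$), gives after dividing by $\lambda$
\[
\|E_\lambda\|_{L^2(p_0)}\le \|G^*\|_{L^2(p_0)} \quad\text{for every } \lambda>0 .
\]
Lemma~\ref{lemma:appendix:poincare} then yields $\|E_\lambda\|_{L^2(p_x)}\le \|G^*\|_{L^2(p_0)}+\|\nabla E_\lambda\|_{L^2(p_x)}$, which is a uniform $\W^{1,2}(\Omega)$ bound. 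This is exactly the paper's step, with $G^*=F$ and $R(F)=0$. In that case the same inequality also gives $R(E_\lambda)\le \lambda\|F\|_{L^2(p_0)}^2\to 0$, and weak lower semicontinuity finishes the proof. Since (ii) always provides a minimizer $G^*$, this bound, and hence the existence of weak accumulation points of $E_\lambda$ itself, does not actually need the extra hypothesis of (iii). That hypothesis is only used to identify the limit gradient as $-\mathbf u$.

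\paragraph{Your mean-zero route.} It can also be completed, but the set $A$ is not the right tool. The condition $\int E_\lambda\,p_0=0$ gives
\[
|\bar E_\lambda| = \Bigl|\int (\bar E_\lambda - E_\lambda)\,p_0\Bigr| \le \Bigl\|\tfrac{p_0}{p_x}\Bigr\|_{L^\infty}\, C\,\|\nabla E_\lambda\|_{L^2(p_x)} .
\]
This uses a global bound on $p_0/p_x$ (or at least $p_0/p_x\in L^2(p_x)$). The paper's existence proof uses such a global bound anyway, but the merely local boundedness stated in the Assumption does not provide it. The direct comparison above avoids this issue entirely.
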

\begin{proof}
    We begin with the limit for $\lambda\rightarrow\infty$. Assume without loss of generality $\lambda\geq 1$. By optimality of $E^*(\cdot,\lambda)$
    \begin{equation}\label{eq:limit_lambda1}
        \begin{aligned}
            \int &E^*(\mathbf{x}_0,\lambda)^2 p_0(x_0) \dd \mathbf x_0\\
            &\leq\iiint \bigl\| -\nabla E^*(\mathbf{x}_t,\lambda) - \mathbf{u} \bigr\|^2  p(\mathbf x_0,\mathbf x_t,t)\dd \mathbf x_t\dd \mathbf x_0\dd t + \lambda \int E^*(\mathbf{x}_0,\lambda)^2 p_0(\mathbf x_0) \dd \mathbf x_0\\
            &\leq \iiint \bigl\| 0 - \mathbf{u} \bigr\|^2 p(\mathbf x_0,\mathbf x_t,t)\dd \mathbf x_t\dd \mathbf x_0\dd t + \lambda \int 0 \dd \mathbf x_0\\
            &\leq \iiint \bigl\|\mathbf{u} \bigr\|^2 p(\mathbf x_0,\mathbf x_t,t)\dd \mathbf x_t\dd \mathbf x_0\dd t<\infty
        \end{aligned}
    \end{equation}
    which implies that $\|E^*(\cdot,\lambda)\|_{L^2(p_0)}$ and $\|\nabla E^*(\cdot,\lambda)\|_{L^2(p_x)}$ are bounded for all $\lambda\geq 1$. As above, we can therefore extract a weakly convergent subsequence $E^*(\cdot, \lambda_n)\rightarrow E_\infty^*$ in $\W^{1,2}(\Omega)$ as well as in $L^2(p_0)$. By weak lower semi-continuity of $E\mapsto \int E(\mathbf{x})^2 p_0(\mathbf x) \dd \mathbf x$ and~\eqref{eq:limit_lambda1} it follows
    \begin{equation}
        \begin{aligned}
            \int E^*_\infty(\mathbf{x}_0)^2 p_0(\mathbf x_0) \dd \mathbf x_0
            &\leq \liminf_{n\rightarrow\infty} \int E^*(\mathbf{x}_0,\lambda_n)^2 p_0(\mathbf x_0) \dd \mathbf x_0\\
            &\leq \frac{1}{\lambda_n}\iiint \bigl\|\mathbf{u} \bigr\|^2 p(\mathbf x_0,\mathbf x_t,t)\dd \mathbf x_t\dd \mathbf x_0\dd t \rightarrow 0
        \end{aligned}
    \end{equation}
    as $n\rightarrow \infty$ implying $E^*_\infty=0$, $p(\mathbf x_0)$-a.e.


    Regarding the convergence as $\lambda\rightarrow 0$, as above we find that 
    \begin{equation}\label{eq:limit_lambda}
        \begin{aligned}
            \iiint &\bigl\| -\nabla E^*(\mathbf{x}_t,\lambda) - \mathbf{u} \bigr\|^2 p(\mathbf x_0,\mathbf x_t,t)\dd \mathbf x_t\dd \mathbf x_0\dd t\\
            &\leq \iiint \bigl\|\mathbf{u} \bigr\|^2 p(\mathbf x_0,\mathbf x_t,t)\dd \mathbf x_t\dd \mathbf x_0\dd t<\infty.
        \end{aligned}
    \end{equation}
    Thus, we can only deduce boundedness of $\nabla E^*(\cdot,\lambda)$ in $L^2(p_x)$ as $\lambda\rightarrow 0$. By the Poincar\'e inequality (\textit{cf.}~\cref{ass}), however, we also have that $F^*(\cdot,\lambda) = E^*(\cdot,\lambda) - \bar E^*(\lambda)$ is bounded in $L^2(p_x)$. Therefore, we may extract a subsequence such that $F^*(\cdot,\lambda_n)\rightharpoonup F^*_0$ as $n\rightarrow \infty$. In particular, it follows that $\nabla E^*(\cdot,\lambda_n) = \nabla F^*(\cdot,\lambda_n)\rightharpoonup \nabla F^*_0$. Once again, by weak lower semicontinuity and optimality we find for any $G\in\W^{1,2}(\Omega)$
    \begin{equation}
        \begin{aligned}
            \iiint &\bigl\| -\nabla F^*_0(\mathbf{x}_t) - \mathbf{u} \bigr\|^2 p(\mathbf x_0,\mathbf x_t,t)\dd \mathbf x_t\dd \mathbf x_0\dd t\\
            \leq& \liminf_{n\rightarrow \infty}\iiint \bigl\| -\nabla E^*(\mathbf{x}_t,\lambda_n) - \mathbf{u} \bigr\|^2 p(\mathbf x_0,\mathbf x_t,t)\dd \mathbf x_t\dd \mathbf x_0\dd t\\
            \leq& \liminf_{n\rightarrow \infty}\iiint \bigl\| -\nabla E^*(\mathbf{x}_t,\lambda_n) - \mathbf{u} \bigr\|^2 p(\mathbf x_0,\mathbf x_t,t)\dd \mathbf x_t\dd \mathbf x_0\dd t + \lambda_n \int E^*(\mathbf{x}_0,\lambda)^2 p(\mathbf x_0) \dd \mathbf x_0\\
            \leq& \liminf_{n\rightarrow \infty}\iiint \bigl\| -\nabla G(\mathbf x_t) - \mathbf{u} \bigr\|^2 p(\mathbf x_0,\mathbf x_t,t)\dd \mathbf x_t\dd \mathbf x_0\dd t + \lambda_n \int G(\mathbf x_0)^2 p(\mathbf x_0) \dd \mathbf x_0\\
            \leq& \iiint \bigl\| -\nabla G(\mathbf x_t) - \mathbf{u} \bigr\|^2 p(\mathbf x_0,\mathbf x_t,t)\dd \mathbf x_t\dd \mathbf x_0\dd t
        \end{aligned}
    \end{equation}
    implying the desired result. The representation, $\nabla E^*(\mathbf x) = -\mathbb{E}[\mathbf{u}|\mathbf x_t=\mathbf x]$ follows directly by the fact that 
    \begin{equation}
        w\mapsto \iiint \bigl\| w(\mathbf x_t) - \mathbf{u} \bigr\|^2 p(\mathbf x_0,\mathbf x_t,t)\dd \mathbf x_t\dd \mathbf x_0\dd t
    \end{equation}
    is minimized for $w(\mathbf x) = \mathbb{E}[\mathbf{u}|\mathbf x_t=\mathbf x]$.
    
    In the last case, we note that 
    \begin{equation}
        \begin{aligned}
            \iiint &\bigl\| w(\mathbf x_t) - \mathbf{u} \bigr\|^2 p(\mathbf x_0,\mathbf x_t,t)\dd \mathbf x_t\dd \mathbf x_0\dd t\\
            =&\int \bigg(\iint \bigl\| w(\mathbf x_t) - \mathbf{u} \bigr\|^2 p(\mathbf x_0,t|\mathbf x_t)\dd \mathbf x_0\dd t \bigg) p(\mathbf x_t)\dd \mathbf x_t\\
        \end{aligned}
    \end{equation}
    which is zero for $w(\mathbf x_t) = \mathbf{u}(\mathbf x_t)$. Therefore, under the assumption that there exists $F\in\W^{1,2}(\Omega)$ with $\nabla F(\mathbf x_t) = \mathbf{u}(\mathbf x_t)$, we find by optimality of $E^*(\cdot,\lambda)$
    \begin{equation}\label{eq:limit_lambda}
        \begin{aligned}
            \iiint &\bigl\| -\nabla E^*(\mathbf{x}_t,\lambda) - \mathbf{u} \bigr\|^2 p(\mathbf x_0,\mathbf x_t,t)\dd \mathbf x_t\dd \mathbf x_0\dd t + \lambda_n \int E^*(\mathbf{x}_0,\lambda)^2 p_0(\mathbf x_0) \dd \mathbf x_0\\
            &\leq \iiint \bigl\| \nabla F(\mathbf x_t) - \mathbf{u} \bigr\|^2 p(\mathbf x_0,\mathbf x_t,t)\dd \mathbf x_t\dd \mathbf x_0\dd t + \lambda_n \int F(\mathbf x_0)^2 p_0(\mathbf x_0)\dd \mathbf x_0\\
            &= \lambda_n \int F(\mathbf x_0)^2 p_0(\mathbf x_0)\dd \mathbf x_0
        \end{aligned}
    \end{equation}
    so that in this case obtain again boundedness of $\nabla E^*(\cdot,\lambda)$ and $E^*(\cdot,\lambda)$ both in $L^2(p_x)$. Extracting weakly convergent subsequences again with limit $E^*_0$ we find by weak lower semi-continuity
    \begin{equation}
        \begin{aligned}
            \iiint &\bigl\| \nabla E^*_0(\mathbf x_t) - \mathbf{u} \bigr\|^2 p(\mathbf x_0,\mathbf x_t,t)\dd \mathbf x_t\dd \mathbf x_0\dd t\\
            &\leq \liminf_{n\rightarrow \infty}\iiint \bigl\| \nabla E^*(\mathbf x_t,\lambda_n) - \mathbf{u} \bigr\|^2 p(\mathbf x_0,\mathbf x_t,t)\dd \mathbf x_t\dd \mathbf x_0\dd t\\
            &\leq \liminf_{n\rightarrow \infty}\iiint \bigl\| \nabla E^*(\mathbf x_t,\lambda_n) - \mathbf{u} \bigr\|^2 p(\mathbf x_0,\mathbf x_t,t)\dd \mathbf x_t\dd \mathbf x_0\dd t+ \lambda_n \int E^*(\mathbf{x}_0,\lambda)^2 p_0(\mathbf x_0) \dd \mathbf x_0\\
            &\leq \liminf_{n\rightarrow \infty}\lambda_n \int F(\mathbf{x}_0,\lambda)^2 p_0(\mathbf x_0) \dd \mathbf x_0=0
        \end{aligned}
    \end{equation}
    concluding the proof.
\end{proof}

\section{Empirical Verification of Local Minima}
\label{sec:landscape_verification}
The RFM construction aims to place data points at local minima of the learned  energy, as the smoothing function ensures pointwise stationarity at $t=0$, and the symmetric restoring targets induce non-negative curvature around data. The theoretical analysis in Section~\ref{sec:analysis} establishes existence and uniqueness of the loss minimizer and characterizes its behavior under varying regularization strength, but does not formally prove the local-minima property for the marginalized predictor. Here we verify empirically that the trained network $E_\theta$ realizes this intended behavior through three complementary experiments.

\paragraph{Deterministic gradient relaxation from data.} We initialize 1000 molecules taken from the training and test sets and run deterministic gradient descent ($\tau=0$) for $N=500$ steps with step size $\eta=0.01$ on the learned energy landscape. We evaluate atom displacement via mean RMSD, the energy change during relaxation, and the change in molecule stability. We conduct this experiment for two models: EBMol trained with the full RFM objective, and the OTFM ablation from Table~\ref{tab:learning_ablation}, which is trained without the extrapolation construction.

\begin{table}[ht]
  \centering
  \caption{Deterministic gradient relaxation initialized at data. Energy values, RMSD (in $\AA$), and stability after $N$
 steps of gradient descent on $E_\theta$.}
  \label{tab:relaxation}
  \begin{tabular}{l l c c c c c}
    \toprule
    \textbf{Method}& \textbf{Split} &\textbf{Median $\Delta E$ $\downarrow$} & \textbf{Mean RMSD $\downarrow$} & \textbf{$\Delta$Mol. Stability} $\downarrow$\\
    \midrule
    EBMol (RFM)  & Train&  -0.59 & 0.121 & 2.4 \\
    EBMol (RFM)  & Test &  -0.67 & 0.124 & 2.1 \\
    \midrule
    OTFM (\ref{tab:learning_ablation}) & Train & -154.98 & 143.365 & 63.1\\
    \bottomrule
  \end{tabular}
\end{table}

Results are reported in Table~\ref{tab:relaxation}. Three observations confirm the role of the RFM construction. First, on training data, EBMol configurations remain effectively stationary (RMSD $0.121\,\AA$, stability loss $2.4$ percentage points), verifying that the RFM loss produces energy minima at training samples as intended. Second, the same holds on held-out test data (RMSD $0.124\,\AA$, stability loss $2.1$ percentage points), demonstrating that the local-minima property generalizes to molecules unseen during training. Third, under the OTFM loss, configurations diverge dramatically (RMSD $143.4\,\AA$) into spurious minima that do not correspond to valid molecular structures, with molecule stability dropping by $63$ percentage points. This confirms that the extrapolation construction of RFM is responsible for placing data at local minima.

\paragraph{Energy and gradient profiles along interpolation paths.} 
We evaluate the gradient magnitude with respect to the position $\|\nabla_c E_\theta(\mathbf{x}_t)\|$ and the cosine similarity between $\nabla_c E_\theta(\mathbf{x}_t)$ and the unsmoothed restoring direction ${\mathbf{u}}^{\mathbf{c}} = \mathrm{sign}(t) \cdot (\mathbf{c}_0 - \mathbf{c}_1),$ along linear interpolation paths between training molecules $(t=0)$ and corresponding optimal-transport-matched prior samples $(t=\pm 1)$. 

Figure~\ref{fig:grad_analysis} shows the results averaged over 1000 molecules. The gradient magnitude drops sharply near $t=0$, consistent with data sitting at stationary points of the learned energy. Away from data, the gradient grows monotonically toward the prior, indicating increasingly steep energy walls around each basin. The cosine similarity remains high ($>0.6$) throughout most of the interval, confirming that the learned gradient field points back toward data across the interpolation range, and drops near $t=0$ where the vanishing gradient magnitude makes the direction ill-defined. The symmetric profile across $t<0$ and $t>0$ reflects the symmetric construction of the RFM target.
\begin{figure}
    \centering
    \includegraphics[width=0.75\linewidth]{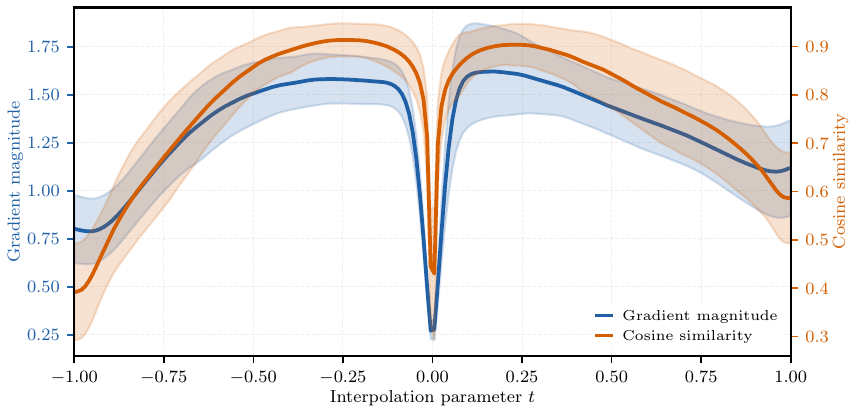}
    \caption{Gradient (wrt. position) magnitude (blue) and cosine similarity between the learned restoring field and the prescribed target direction (orange) along interpolation paths from prior ($t=\pm 1$) through data ($t=0$). Shaded regions indicate one standard deviation over 1000 molecules. The gradient vanishes near data, confirming stationarity, while the high cosine similarity away from data confirms that the learned field consistently points toward training molecules.}
    \label{fig:grad_analysis}
\end{figure}
    
\paragraph{Response to Gaussian perturbation.}
We perturb training molecules by adding isotropic Gaussian noise $\boldsymbol{\epsilon} \sim \mathcal{N}(0, \sigma^2 I)$ to atomic coordinates for increasing noise levels $\sigma$ and evaluate $E_\theta$ at the perturbed configurations. Figure~\ref{fig:energy_noise} shows the learned energy as a function of $\sigma$, averaged over 1000 molecules. The energy increases smoothly and monotonically with perturbation magnitude, starting near zero at clean data. This confirms that data points sit at the center of well-shaped energy basins, with the learned landscape assigning progressively higher energy to configurations that deviate from valid molecular geometries.
\begin{figure}
    \centering
    \includegraphics[width=0.75\linewidth]{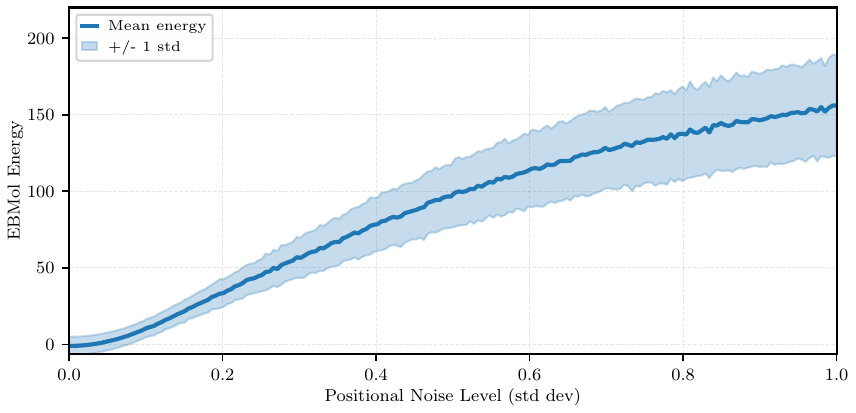}
    \caption{Learned energy $E_\theta$ as a function of Gaussian noise magnitude $\sigma$ applied to training molecule coordinates. Shaded regions indicate one standard deviation over 1000 molecules. The monotonic increase confirms that data points occupy local minima of the learned landscape.}
    \label{fig:energy_noise}
\end{figure}

\section{Structural and energetic analysis}
\label{sec:energy_correlation_appendix}

\paragraph{Bond distance distributions.} Figure~\ref{fig:bond_distances} provides a per-element decomposition of the aggregate bond length errors reported in Table~\ref{tab:geom_drugs_energy}, comparing bond distance distributions for molecules generated by SLDM and EBMol$_{7240}$ against the training data on GEOM-Drugs. EBMol recovers the C--C distribution with high fidelity, accurately capturing peak locations, relative heights, and the separation between single, aromatic, and double bond modes. For C--N and C--O, EBMol correctly resolves relative peak proportions, which SLDM fails to achieve. EBMol also successfully recovers the bimodality of C--H bonds.

\begin{figure}
\centering
\includegraphics[width=\linewidth]{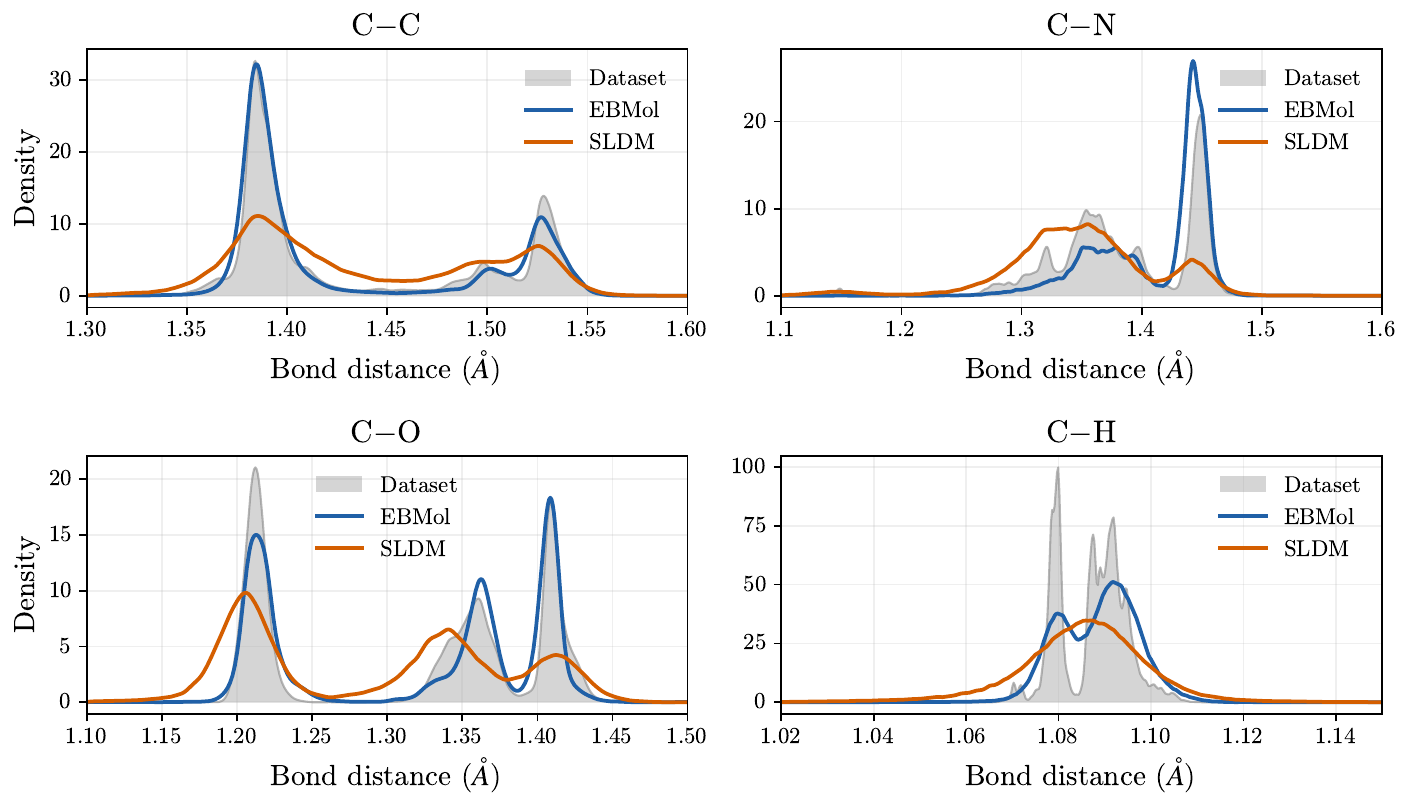}
\caption{Bond distance distributions for C--C, C--N, C--O, and C--H pairs on GEOM-Drugs, comparing EBMol$_{7240}$ (blue) and SLDM (orange) against the dataset (gray). Distinct peaks correspond to different bond orders. EBMol recovers the multimodal structure of the reference distributions, with sharper peak separation and more realistic proportions than SLDM.}
\label{fig:bond_distances}
\end{figure}

\paragraph{Cross-model energy correlation.}Figure~\ref{fig: energy correlation} plots EBMol's learned energy against the energy decrease upon GFN2-xTB relaxation for 10,000 molecules generated by EDM. The positive correlation and its interpretation are discussed in Section~\ref{sec: physically meaningful}.
\begin{figure}
    \centering
    \includegraphics[width=1\linewidth, trim={0 0.35cm 0 0.15cm}, clip]{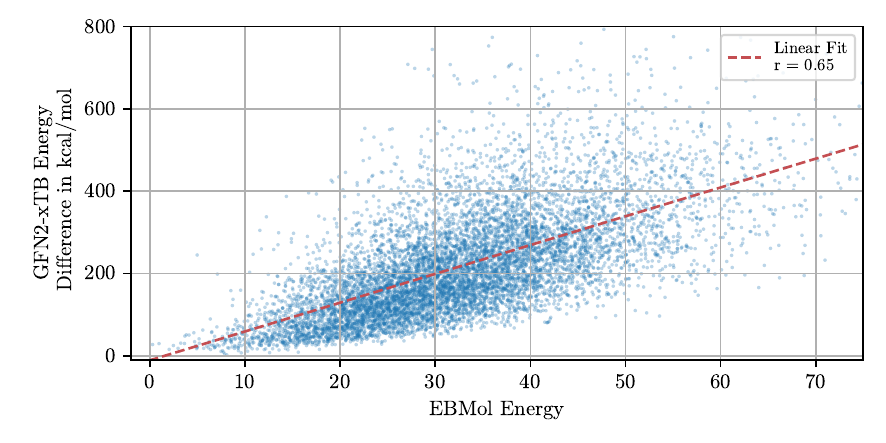}
    \caption{The learned energy correlates with physical quality across models. EBMol energy vs.\ energy decrease upon GFN2-xTB relaxation for 10k molecules generated by EDM ($r = 0.65$, $p < 0.0001$). Higher EBMol energy predicts larger relaxation energy, indicating physically strained geometries.}
    \label{fig: energy correlation}

\end{figure}

\section{Training objective and sampler ablation}
\label{sec: model sampler ablation}
\paragraph{Training objective ablation.} A learned energy landscape requires a sampler to generate samples.  
We therefore evaluate each training objective under three different samplers (Forward Euler (FWDE), Annealed Langevin Dynamics (ALD), and Parallel Tempering (PT)) to disentangle the contributions of training and sampling. All ablations use a reduced-size model with embedding dimension halved to 128 to keep the search tractable. To ensure a fair comparison, we tune sampler hyperparameters separately for each training objective. For FWDE, we sweep the step size over $[0.05, 0.001]$ and select the configuration with the highest atom stability. For the FM baselines this controls how finely the $[0,1]$ integration interval is discretized, giving them the best possible performance under their natural sampling procedure. For ALD, we lock in the selected step size and additionally sweep over the noise scale $\sigma$, using the same value for coordinates and atom types for simplicity. For PT, we reuse the ALD configuration. We acknowledge this sequential scheme is suboptimal but it keeps the search feasible across all method-sampler combinations. All experiments use a fixed budget of 1080 NFE per sample on average and $n=500$ samples per experiment. In addition to atom stability and validity, we report two metrics specific to this ablation: W2, the Wasserstein-2 distance between pairwise atom distance distributions of generated and reference molecules restricted to the $1.0$--$2.0$\,\AA\ range most relevant for bond lengths, and the number of diverged samples that exceeded plausible energy bounds during sampling and were discarded. Results are reported in Table~\ref{tab:learning_ablation}.

We ablate against two baselines that share EBMol's energy architecture but are trained with the flow matching loss: Standard FM (without optimal transport or atom permutation alignment) and Equivariant OT-FM (with SE(3)-equivariant optimal transport coupling and atom permutation). The gap between the two is substantial, confirming that equivariant OT alignment is critical for learning a useful energy landscape in our framework. However, Equivariant OT-FM produces a significant number of diverged samples across all samplers, consistent with the relaxation experiments in Section~\ref{sec:landscape_verification}: without the RFM construction, data points are not local minima, and gradient-based sampling can escape into unphysical regions. This problem is most severe under PT, where high-temperature chains actively encourage exploration of the landscape. We note that the 1074 diverged samples for only 500 generated samples is possible because our parallel tempering implementation replaces diverged samples with fresh prior samples periodically. This also precludes inference-time compute scaling, since increasing the sampling budget leads to more divergences rather than better samples.

The RFM block tells a different story. No RFM variant produces a single diverged sample, confirming that the restoring-field construction yields a landscape that is globally stable under sampling. Within the RFM variants, several trends emerge. Comparing RFM with and without the smoothing function $b(t)$ shows that smoothing improves the W2 bond-distance metric, suggesting that the smoother basins are easier for the sampler to navigate, though PT partially compensates for the absence of smoothing. Moving from a static isotropic prior to the data-driven anisotropic prior described in the Implementation section yields a further improvement, most visible in the Valid \& Connected rate: the full RFM model achieves 37.7\% under PT, compared to 22.6\% for the static-prior variant. This indicates that the data-driven prior helps the sampler to find not just locally valid atomic environments (atom stability) but globally coherent molecular structures (validity and connectivity).

\begin{table}[t]
\centering
\caption{Ablation of training objectives on GEOM-Drugs using a reduced-size model. ALD: Annealed Langevin Dynamics; PT: Parallel Tempering. No $b(t)$ omits smoothing and static prior uses a single fixed prior during training. Best overall per metric in \textbf{bold}, best within each block \underline{underlined}.}
\label{tab:learning_ablation}
\resizebox{\textwidth}{!}{
\begin{tabular}{l c c c c c}
\toprule
\textbf{Method} & \textbf{Sampler} & \textbf{Atom} & \textbf{Valid \&} & \textbf{W2 Dist.} & \textbf{Diverged} \\
&  &\textbf{Stability $\uparrow$} &  \textbf{Connected $\uparrow$} &\textbf{($\times 10^{-2}$) $\downarrow$} & \textbf{Samples} $\downarrow$\\
\midrule
\multicolumn{6}{l}{\textit{FM-based baselines}} \\
\midrule
Standard FM                   & FWDE & 40.4 & 0.00 & 5.3 & 0 \\
Standard FM                   & ALD & 40.3 & 0.00 & 5.2 & 0 \\
Standard FM                   & PT & 40.1 & 0.00 & 5.3 & 0 \\

Equivariant OT-FM             & FWDE & 60.5 & {1.6} & 3.3 & 40 \\
Equivariant OT-FM             & ALD & \underline{64.6} & 2.2 & \underline{0.6} & 51 \\
Equivariant OT-FM             & PT & 53.7 & \underline{14.9} & 0.8 & 1074 \\
\midrule
\multicolumn{6}{l}{\textit{RFM ablations}} \\
\midrule
RFM, no $b(t)$, static prior  & FWDE & 76.1 & 9.4 & 0.8 & 0 \\
RFM, no $b(t)$, static prior  & ALD & 79.7 & 16.8 & 0.6 & 0 \\
RFM, no $b(t)$, static prior  & PT & \textbf{85.2} & 22.4 & \underline{0.4} & 0 \\

RFM, static prior             & FWDE & 78.6 & 15.8 & \underline{0.4} & 0 \\
RFM, static prior             & ALD & 79.1 & 13.0 & \underline{0.4} & 0 \\
RFM, static prior             & PT & 81.6 & \underline{22.6} & 0.5 & 0 \\

\midrule
RFM                         & FWDE & {79.6} & {23.6} & {0.4} & 0 \\
RFM                         & ALD & {81.6} & {33.0} & {0.3} & 0 \\
RFM                         & PT & \underline{83.6} & \textbf{37.7} & \textbf{0.2} & 0 \\

\bottomrule
\end{tabular}
}
\end{table}

We ablate sampler choices for our full RFM model used throughout this paper, evaluating on GEOM-Drugs with $n=200$ samples. We follow the same incremental hyperparameter search as before: sweep step size for FWDE, lock it in and additionally sweep $\sigma$ for ALD. For PT we use the hyperparameters from Table~\ref{tab:hyperparameters}. 

Results are reported in Table~\ref{tab:sampling_ablation}. The results show that noise is critical for sampling quality, as FWDE ($\sigma=0$) achieves reasonable atom stability but near-zero molecule stability and the highest median energy, indicating that deterministic trajectories get trapped in shallow local minima. ALD improves substantially across all metrics, confirming that stochastic exploration helps escape these shallow basins. However, ALD tends to produce locally valid fragments rather than globally coherent molecules, as evidenced by the gap between its atom stability and Valid \& Connected rate. Both PT variants close this gap, indicating that the temperature ladder successfully enables basin-to-basin transitions. A surprising observation is the significant difference between the two PT variants, as Atom Max nearly doubles molecule stability compared to Atom Avg at matched NFE (24.5\% vs. 15.5\% at 1080 steps). We additionally test inference-time compute scaling by increasing the NFE budget from 1080 to 1960 steps. All samplers improve, with PT + Atom Max showing the largest gains, reaching 40.0\% molecule stability.

\begin{table}[ht]
    \centering
    \caption{Ablation of sampling algorithms for our RFM-trained model on GEOM-Drugs ($n=200$ samples). ALD: Annealed Langevin Dynamics; PT: Parallel Tempering.}
    \label{tab:sampling_ablation}
    \begin{tabular}{@{}lcccccc@{}}
        \toprule
        \textbf{Algorithm} & \textbf{(Avg.)} & \textbf{Atom } & \textbf{Molecule} & \textbf{Valid \&} & \textbf{Diversity} & \textbf{Median}\\
         & \textbf{Steps}& \textbf{Stab. ($\uparrow$)} &  \textbf{Stab. ($\uparrow$)} & \textbf{Connected($\uparrow$)} & \textbf{($\uparrow$)} & \textbf{Energy} ($\downarrow$)\\
        \midrule
        FWDE        & 1080 & 88.2 & 0.5  & 48.0 & 129.8 & 21.7 \\
        FWDE        & 1960 & 88.1 & 0.5  & 50.0 & 130.0 & 20.8 \\
        ALD            & 1080 & 93.0 & 18.0 & 61.0 & 110.0 & 7.7 \\
        ALD            & 1960 & 95.5 & 23.5 & 70.5 & 108.9 & 4.6 \\
        PT + Atom Avg. & 1080 & 92.9 & 15.5 & 78.0 & 116.4 & 3.8 \\
        PT + Atom Avg. & 1960 & 95.5 & 22.0 & 82.5 & 113.0 & 1.6\\
        PT + Atom Max  & 1080 & 95.6 & 24.5 & 83.0 & 114.5 & 2.5 \\
        PT + Atom Max  & 1960 & 97.3 & 40.0 & 89.5 & 108.1 & 1.3 \\

        \bottomrule
    \end{tabular}
\end{table}

\section{Hyperparameters \& experiment Details}
\label{sec: hyperparameters and details}
\paragraph{Implementation.} We implement EBMol in PyTorch using PyTorch Geometric~\cite{feyPyG20Scalable2025} for their efficient batched graph operations. SE(3) and permutation alignment uses the efficient CPU-based optimal-transport solver of~\cite{kleinEquivariantFlowMatching2023, songEquivariantFlowMatching2023}. We use exponential moving average (EMA) for evaluation and sampling. Dataset splits for QM9 and GEOM-Drugs follow~\cite{hoogeboomEquivariantDiffusionMolecule2022}.

\paragraph{Compute.} The QM9 model was trained for approximately 4 days on a single NVIDIA RTX 3090 Ti. The GEOM-Drugs model was trained for approximately 3 days on 2 $\times$ NVIDIA L40.

\begin{table}[ht]
\centering
\caption{Full hyperparameters.}
\small
\label{tab:hyperparameters}
  \begin{tabular}{l c c}
    \toprule
    \textbf{Parameter} & \textbf{QM9} & \textbf{GEOM-Drugs} \\
    \midrule
    \multicolumn{3}{c}{\textbf{Network}} \\
    \midrule
    EGNN layers  & 9 & 4 \\
    Embedding dimension & 256 & 128 \\
    Energy MLP layers & 2 & 2 \\
    Energy MLP hidden dim  & 256 & 128\\
    Total parameters & 5.2M & 2.2M \\
    \midrule
    \multicolumn{3}{c}{\textbf{Training}} \\
    \midrule
    \multicolumn{3}{l}{\textit{Optimization}} \\
    \midrule
    Optimizer & Adam & Adam \\
    Learning rate &5e-5 & 5e-5  \\
    LR schedule & Linear & Linear\\
    Batch size & 64 & 16 \\
    Training epochs & 750 & 7 \\
    EMA decay & 0.999 & 0.999\\
    Gradient clipping & Yes & Yes\\
    \midrule
    \multicolumn{3}{l}{\textit{RFM loss}} \\
    \midrule
    Smoothing function $b(s)$ & $\tanh(\gamma s)$ & $\tanh(\gamma s)$\\
    Smoothing sharpness $\gamma$ & 25.0 & 25.0 \\
    Energy regularization $\lambda_{\text{reg}}$ & 1e-3 & 1e-3 \\
    \midrule
    \multicolumn{3}{l}{\textit{Prior}} \\
    \midrule
    Position prior & \multicolumn{2}{c}{Anisotropic Gaussian (PCA-matched)} \\
    Atom-type prior & \multicolumn{2}{c}{$\text{Dir}(1/K, \dots, 1/K)$} \\
    \midrule
    \multicolumn{3}{c}{\textbf{Sampling}} \\
    \midrule
    \multicolumn{3}{l}{\textit{Mirror-Langevin Algorithm}} \\
    \midrule
    Step size $\eta$ & 0.05 & 0.1\\
    Coordinate noise $\sigma_\mathbf{c}$ &0.6 &  0.2 \\
    Atom-type noise $\sigma_\mathbf{p}$ & 0.6&  0.4 \\
    Simplex floor $\epsilon$ & 0.0015 & 0.0005\\
    \midrule
    \multicolumn{3}{l}{\textit{Parallel tempering}} \\
    \midrule
    Number of chains $M$  & 8 & 8 \\
    Temperature levels  & 11 & 11 \\
    Min/Max Temperature & 0.05/1.0 & 0.05/1.0 \\
    Steps between swaps & 10 & 10 \\
    Swaps between harvesting & [8, 12, 16] & [8, 16 ,32, 64] \\
    Swap criterion & Atom Max & Atom Max \\
    Relaxation steps ($\tau=0$) & 50 & 200 \\
    Relaxation step size        & 0.01 & 0.01 \\
    \bottomrule
  \end{tabular}
\end{table}

\paragraph{Simplex boundary stabilization.}
\label{sec:boundary_stabilization}
The Mirror Langevin Algorithm requires the mirror potential $\Phi$ to
satisfy a self-concordance-like condition (A1)
of~\cite{zhangWassersteinControlMirror2020}: there exists $\kappa \geq 0$
such that
\begin{equation}
    \sqrt{2}\,\bigl\lVert [\nabla^2\Phi(\mathbf{x})]^{1/2} -
    [\nabla^2\Phi(\mathbf{x}')]^{1/2} \bigr\rVert_F
    \leq \kappa\, \lVert \nabla\Phi(\mathbf{x}) -
    \nabla\Phi(\mathbf{x}') \rVert_2.
\end{equation}
For the negative-entropy potential
$\Phi^\mathbf{p}(\mathbf{p}) = \sum_i p_i \log p_i$, the Hessian is
$\nabla^2\Phi^\mathbf{p} = \mathrm{diag}(1/p_i)$ and its square root is
$[\nabla^2\Phi^\mathbf{p}]^{1/2} = \mathrm{diag}(1/\sqrt{p_i})$. As $p_i \to 0$,
the left-hand side of (A1) grows as $p_i^{-1/2}$ while the right-hand
side grows as $|\log p_i|$, so $\kappa = \infty$ on the full simplex
interior, consistent with Table~1
of~\cite{zhangWassersteinControlMirror2020}.

By clamping $\mathbf{p} \geq \epsilon$ before each update, we restrict
the domain to
$\Delta^{K-1}_\epsilon = \{\mathbf{p} \in \Delta^{K-1} : p_i \geq
\epsilon\ \forall i\}$, a compact subset of the simplex interior on
which $\kappa$ is finite. The choice of $\epsilon$ involves a tradeoff:
too small and the logits $\log p_i$ approach numerical limits, causing
instability in the dual-space update; too large and the floor acts as a
persistent noise source that prevents atom types from fully committing
to a vertex, degrading sample quality. In effect, $\epsilon$ controls
the minimum categorical noise injected at each step via the
$1/\sqrt{p_i}$ scaling, establishing a noise floor below which the
sampler cannot refine atom-type assignments. This makes $\epsilon$ a
dataset-dependent hyperparameter that requires tuning: datasets with
fewer atom types tolerate smaller values, while larger type vocabularies
need careful balancing to avoid both numerical instability and
premature convergence. We report the values used for each benchmark in
Table~\ref{tab:hyperparameters}.

\section{Generated samples}
\label{sec: generated samples}

Figure~\ref{fig:shape_samples} shows random samples for GEOM-Drugs using shape-steering, while Figure~\ref{fig:random_samples} shows unconditional samples from the same trained {EBMol}.

\begin{figure}[h!]
    \centering
    \includegraphics[width=1\linewidth]{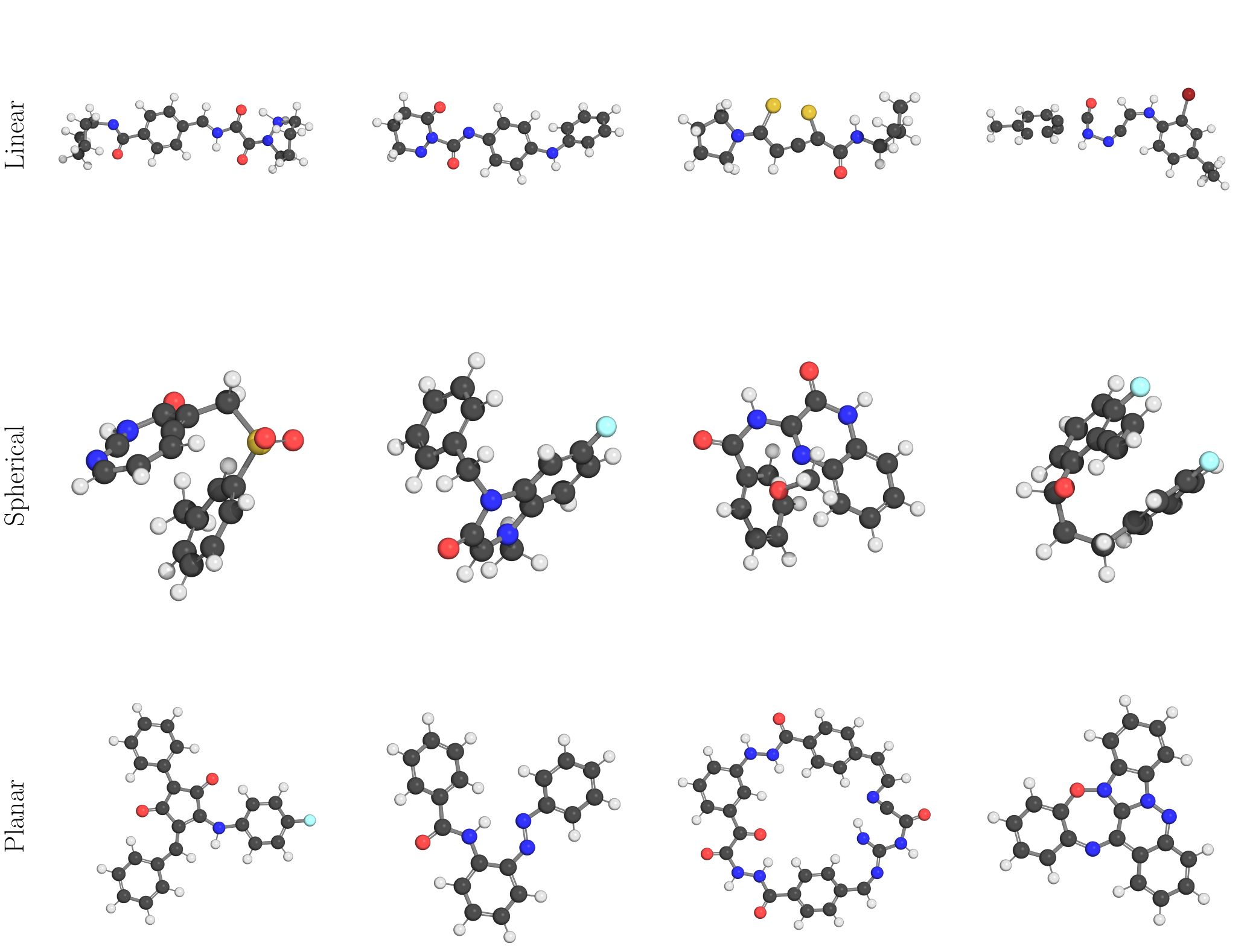}
  \caption{Random samples from an \textit{EBMol} model trained on GEOM-Drugs using potential composition to steer the molecule shape. Samples are generated with a compute budget of 1960 NFEs.}\label{fig:shape_samples}
\end{figure}

\begin{figure}[h!]
    \centering
    \includegraphics[width=1\linewidth]{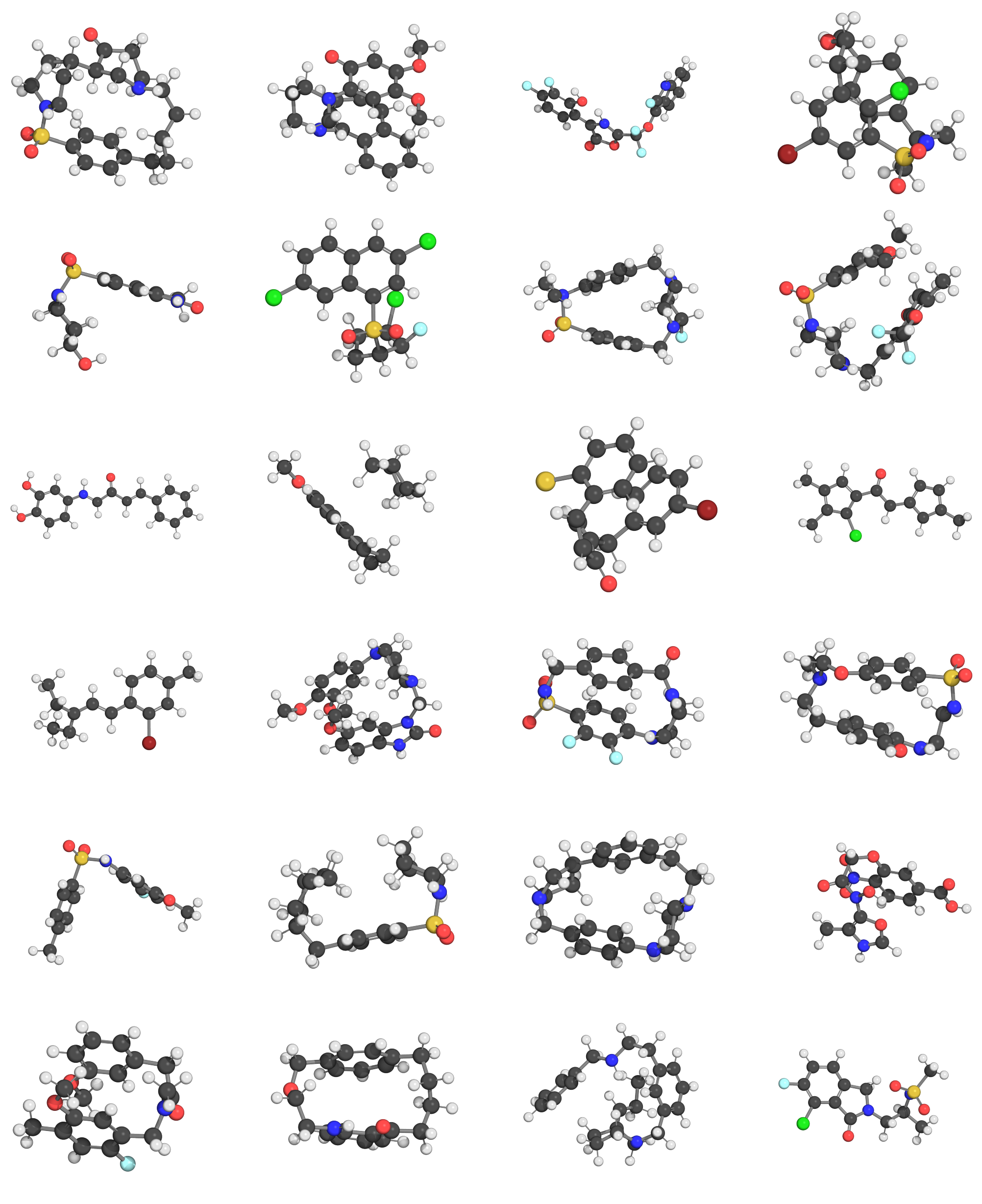}
  \caption{Random samples from an \textit{EBMol} model trained on GEOM-Drugs. Samples are generated with a compute budget of 1960 NFEs.}\label{fig:random_samples}
\end{figure}


\end{document}